\def \hy #1{\textcolor{black}{#1}}
\newcommand{{\myalg}}{ST-GT}
\newenvironment{proof}{\noindent\textit{Proof.}\ }{\hfill$\square$\par}
\def\title@fmt#1#2{%
  \@ifundefined{@runtitle}{\global\def\@runtitle{#1}}{}%
  \@articletypesize
  \leavevmode\vphantom{Aye!}%
  \@articletype
  \vskip12\p@
  \hbox to\textwidth{%
    \hss
    \parbox{1.10\textwidth}{
      \centering
      {\@titlesize #1\,\hbox{$^{#2}$}\par}%
    }%
    \hss
  }%
  \vskip\@undertitleskip
}
\begin{document}

\begin{frontmatter}

\title{Beyond Scaffold: A Unified Spatio-Temporal \\ Gradient Tracking Method\thanksref{footnoteinfo}}

\thanks[footnoteinfo]{The computations/data handling/[SIMILAR] were/was enabled by resources provided by the National Academic Infrastructure for Supercomputing in Sweden (NAISS), partially funded by the Swedish Research Council through grant agreement no. 2022-06725.}

\author[First]{Yan Huang} 
\author[Second]{Jinming Xu} 
\author[Second]{Jiming Chen} 
\author[First]{Karl Henrik Johansson}

 \address[First]{Division of Decision
and Control Systems, School of EECS, KTH Royal Institute of Technology,
SE-100 44 Stockholm, Sweden (e-mail: yahuang@kth.se, kallej@kth.se)}

\address[Second]{College of Control Science and Engineering, Zhejiang University, 310027 Hangzhou, China (e-mail: jimmyxu@zju.edu.cn, cjm@zju.edu.cn)}

\begin{abstract}                
In distributed and federated learning algorithms, communication overhead is often reduced by performing multiple local updates between communication rounds. However, due to data heterogeneity across nodes and the local gradient noise within each node, this strategy can lead to the drift of local models away from the global optimum. To address this issue, we revisit the well-known federated learning method Scaffold \citep{karimireddy2020scaffold} under a gradient tracking perspective, and propose a unified spatio-temporal gradient tracking algorithm, termed {\myalg}, for distributed stochastic optimization over time-varying graphs. {\myalg} tracks the global gradient across neighboring nodes to mitigate data heterogeneity, while maintaining a running average of local gradients to substantially suppress noise, with slightly more storage overhead. Without assuming bounded data heterogeneity, we prove that {\myalg} attains a linear convergence rate for strongly convex problems and a sublinear rate for nonconvex cases. Notably, {\myalg} achieves the first linear speed-up in communication complexity with respect to the number of local updates per round $\tau$ for the strongly-convex setting. Compared to traditional gradient tracking methods, {\myalg} reduces the topology-dependent noise term from $\sigma^2$ to $\sigma^2/\tau$, where $\sigma^2$ denotes the noise level, thereby improving communication efficiency. 
\end{abstract}


\begin{keyword}
 Distributed optimization, federated learning, data heterogeneity.
\end{keyword}

\end{frontmatter}

\section{Introduction}
\label{sec:intro}

Data parallelism is a standard paradigm in large-scale machine learning tasks, in which multiple devices collaboratively train a shared model \citep{dean2012large}. Common strategies include employing a centralized server–worker architecture, as in federated learning \citep{Li2014scaling}, where a server aggregates gradients and distributes model updates. Alternatively, a distributed network architecture allows each node to compute locally and exchange information with its neighbors \citep{yuan2016convergence}, making it well-suited for training over multiple data centers \citep{lian2017can}, wireless sensor networks \citep{rabbat2004distributed}, and multi-robot systems \citep{tian2022distributed}. In general, both paradigms aim to solve the following distributed optimization problem with $n$ nodes:
\begin{equation}\label{Def_prob} 
\underset{x\in \mathbb{R} ^p}{\min}f\left( x \right) =\frac{1}{n}\sum_{i=1}^n{\underset{:=f_i\left( x \right)}{\underbrace{\mathbb{E} _{\xi _i\sim \mathcal{D} _i}\left[ f_i\left( x;\xi _i \right) \right] }},}
\end{equation}
where $x\in \mathbb{R} ^p$ is the global model parameter, $f_i$ the local objective function, and $\xi_i$ the random sample drawn from the local data distribution $\mathcal{D} _i$ accessible only to node $i$. 

To reduce communication overhead between neighboring nodes or between nodes and the central server, it is common to skip certain communication rounds and perform multiple local updates in federated and distributed learning \citep{mcmahan2017communication, nguyen2023performance}. However, due to data heterogeneity across nodes and the sampling noise inherent in local gradients, reducing communication can substantially increase the drift of local models from the global gradient direction \citep{karimireddy2020scaffold} and amplify gradient noise \citep{huang2023computation}, resulting in additional errors or requiring diminishing stepsizes. Addressing these issues while preserving communication efficiency remains an active research problem. Moreover, although distributed learning and federated learning share nearly identical objectives and challenges, existing studies often treat them separately, which hinders a unified understanding of data-parallel learning algorithms.

\subsection{Related Work}
\label{sec:prior}

\textbf{Federated learning.} 
Canonical federated learning methods adopt a server–worker architecture, where sampled worker nodes perform local computations in a data-parallel manner and then communicate with the server node to achieve information aggregation. A major issue with this approach is that the server node becomes a communication bottleneck and a potential single point of failure \citep{zhang2023fedaudio}. 
To reduce the communication load per round on the server, \cite{mcmahan2017communication} proposed the FedAvg algorithm, which improves performance by allowing only a subset of nodes to participate in training and by incorporating multiple rounds of local updates. FedAvg has since been widely applied in many real-world scenarios, particularly for the case of identically and independently distributed (i.i.d.) data \citep{stich2018local}. However, \cite{karimireddy2020scaffold} demonstrated that FedAvg fails to guarantee exact convergence under heterogeneous data distributions. The reason is that each worker may converge toward its own local optimum based solely on its dataset, resulting in client drift. To mitigate this issue, they proposed the Scaffold algorithm, which introduces control variables at both the server and worker sides to correct local gradient directions, thereby counteracting client drift. 
This method was later extended to various settings, including those with random communication intervals \citep{mishchenko2022proxskip}, finite-sum problems with variance reduction \citep{jiang2024federated}, and federated compositional optimization problems \citep{zhang2024composite}, to name a few, achieving improved performance in non-i.i.d. scenarios.
Another line of research for addressing data heterogeneity is personalized federated learning, which includes approaches such as regularization-based methods \citep{li2020federated} and knowledge distillation \citep{lee2022preservation}. For a comprehensive overview, we refer readers to recent surveys on this topic \citep{mora2024enhancing}.

\textbf{Distributed learning.} 
In a distributed network without a central server, each node communicates only with its immediate neighbors \citep{ram2009asynchronous}. Such a topology is more flexible and robust against single-point failures. 
However, relying solely on peer-to-peer communication also makes distributed learning algorithms vulnerable to data heterogeneity, and this effect becomes increasingly pronounced as network connectivity weakens \citep{lian2017can}. To address this issue, gradient tracking (GT) algorithms have become a mainstream choice \citep{xu2015augmented, di2016next, nedic2017achieving}. For instance, \cite{xu2015augmented} proposed Aug-DGM, which incorporates a global gradient estimator with a dynamic average-consensus protocol. This method progressively eliminates the effect of data heterogeneity and achieves linear convergence with a constant stepsize for strongly-convex and smooth objectives. Aug-DGM was later extended to stochastic gradient settings \citep{pu2021distributed} and further applied to various machine learning tasks \citep{lu2021optimal}. 
To improve the communication efficiency, \cite{nguyen2023performance} integrated GT with multiple local updates and proposed the LU-GT algorithm for deterministic settings. \cite{wu2025effectiveness} showed that LU-GT remains communication-efficient under mild data heterogeneity. 
However, simply skipping communication will, indeed, amplify the impact of gradient noise, thereby increasing computational complexity.
\cite{huang2023computation} analyzed the trade-off between communication and computation costs with respect to the number of local updates, and proposed FlexGT, a flexible gradient tracking method for stochastic settings that supports adjustable computation and communication steps.
Subsequently, \cite{liu2024decentralized} introduced the K-GT algorithm, which employs a Scaffold-style control variable to correct local gradients and reduce gradient noise by scaling with the number of local updates; however, this improvement hinges on initializing the control variables with global gradients and exhibits a stronger dependence on the network topology.

\subsection{Contributions} 
In this work, we solve the distributed stochastic optimization problem \eqref{Def_prob} with non-independent and identically distributed (non-i.i.d.) datasets. The main contributions are summarized as follows: 
\begin{itemize}
     \item We propose a unified spatio-temporal gradient tracking algorithm ({\myalg}) for time-varying graphs, applicable to both distributed and federated learning. By jointly tracking the global gradient across nodes and the running average of local gradients, {\myalg} effectively mitigates data heterogeneity and reduces gradient noise, incurring only a slight increase in storage overhead. Furthermore, by extending Scaffold with a tunable parameter in its global control variable, we show that Scaffold emerges as a special case of {\myalg} under a random communication topology, thereby offering a unified framework that bridges distributed and federated learning paradigms.
    \item Without assuming any bound on data heterogeneity, we prove that {\myalg} achieves a linear convergence rate (Theorem~\ref{Thm_sc}) and a sublinear rate (Theorem~\ref{Thm_nc}) for strongly convex and nonconvex objective functions, respectively. More importantly, {\myalg} achieves the first linear speed-up in communication complexity with respect to the number of local updates per round $\tau$ in the strongly convex setting, improving upon FlexGT by a factor of $1/\tau$. It also reduces the network dependence from $1/(1-\rho)^2$ to $1/(1-\rho)^{3/2}$ compared with K-GT (see Table~\ref{Tab_comparison_of_related_work}), where $\rho$ denotes the graph connectivity. The theoretical results are validated on both synthetic and real-world datasets.
\end{itemize}

\textbf{Paper organization.}
The remainder of the paper is organized as follows. Section~\ref{sec:formulation} formulates the distributed stochastic optimization problem and introduces the design of {\myalg}. Section~\ref{sec:recovering} studies an extended Scaffold algorithm and establishes its connections with {\myalg}. Section~\ref{sec:conv_results} presents the convergence results of {\myalg}. Section~\ref{sec:simulation} reports numerical experiments that validate the theoretical analysis. Finally, Section~\ref{sec:conclusion} concludes the paper, and Appendix~\ref{sec:appendix} contains several supporting lemmas and the proofs of the main results.

\textbf{Notations.} 
In this work, we use the following notation: $\|\cdot\|$ denotes the Frobenius norm, $\langle \cdot , \cdot \rangle$ the inner product, $\left| \cdot \right|$ the cardinality of a set, and $\mathbb{E}[\cdot]$ the expectation of a vector or matrix. We let $\mathbf{1}$ be the all-ones vector, $\mathbf{I}$ the identity matrix, and define the averaging matrix as $\mathbf{J} = \mathbf{1}\mathbf{1}^{\top}/n$. In addition, the asymptotic notation $\mathcal{O}(\cdot)$ is used to suppress constant factors, while $\tilde{\mathcal{O}}(\cdot)$ further omits logarithmic factors.

\section{Problem Formulation and {\myalg} Algorithm}
\label{sec:formulation}

\subsection{Distributed Stochastic Optimization}

For the implementation purpose, we consider the following equivalent problem with consensus constraints:

\begin{equation}\label{Def_prob_con} 
\begin{aligned}
\underset{X\in \mathbb{R} ^{n\times p}}{\min} &F\left( X \right) =\frac{1}{n}\sum_{i=1}^n{\underset{:=f_i\left( x_i \right)}{\underbrace{\mathbb{E} _{\xi _i\sim \mathcal{D} _i}\left[ f_i\left( x_i;\xi _i \right) \right] }}},
\\
&\text{s.t.} \,\,x_i=x_j, \,\, i,j\in[n],
\end{aligned}
\end{equation}
where $X=\left[ x_1,\dots ,x_n \right] ^{\top}\in \mathbb{R} ^{n\times p}$ is the collection of the local decision variable $x_i\in \mathbb{R} ^{p}$ of each node $i$. The nodes communicate over a network whose topology is represented by a graph $\mathcal{G}=(\mathcal{V},\mathcal{E})$, where $\mathcal{V}={1,2,\ldots,n}$ is the set of agents and $\mathcal{E}\subseteq\mathcal{V}\times \mathcal{V}$ is the set of edges, with each edge $(i,j)$ indicating a communication link between agents $i$ and $j$. Each agent $i$ exchanges information only with its neighbors, defined as $\mathcal{N}_i={ j \mid j\ne i,, (i,j)\in \mathcal{E} }$, together with itself. 
Particularly, to accommodate both gossip-based communication in distributed learning and partial client participation in federated learning, we consider a general dynamic graph $\mathcal{G}_r$ at each communication round $r$, satisfying the following assumption. 
\begin{assum}[Connectivity in expectation] \label{Ass_graph}
The weight matrix $W_r$ induced by a dynamic graph $\mathcal{G}_r$ is doubly stochastic, i.e., $W_r\mathbf{1}=\mathbf{1},\mathbf{1}^{\top}W_r=\mathbf{1}^{\top}$ and $\rho :=\mathbb{E} \left[ \left\| W_r-\mathbf{J} \right\| _{2}^{2} \right] <1,\forall r\geqslant 0$.
\end{assum}
Note that this assumption requires the underlying communication graph to satisfy a contraction property only in expectation, rather than at every round as assumed in \cite{nguyen2023performance, liu2024decentralized}. \hy{This relaxation allows for more flexible and communication-efficient network topologies \citep{ying2021exponential, nguyen2025graphs}. }

\subsection{The {\myalg} Algorithm}

In this work, we address Problem \eqref{Def_prob_con} with non-i.i.d. local datasets, i.e., $\mathcal{D} _i\ne \mathcal{D} _j, i\ne j$. 
For brevity, we denote
\begin{equation*}
\begin{aligned}
X_k&:=\left[ \dots ,x_{i,k},\dots \right] ^{\top}, Y_k:=\left[ \dots ,y_{i,k},\dots \right] ^{\top}\in \mathbb{R} ^{n\times p},
\\
G_k&:=\left[ \dots ,\nabla f_i\left( x_{i,k};\xi _{i,k} \right) ,\dots \right] ^{\top}\in \mathbb{R} ^{n\times p},
\\
\nabla F_k&:=\left[ \dots ,\nabla f_i\left( x_{i,k} \right) ,\dots \right] ^{\top}\in \mathbb{R} ^{n\times p}.
\end{aligned}
\end{equation*}
as the collections of the local model parameters, gradient tracking variables, stochastic gradient, and full gradient, respectively, at iteration $k$. 

Recall the GT method with local updates \citep{nguyen2023performance, huang2023computation, wu2025effectiveness}, which follows the update rules given below:
\begin{equation}
\begin{aligned}
X_{\left( r+1 \right) \tau}&=W\left( X_{r\tau}-\gamma \sum_{t=0}^{\tau -1}{Y_{r\tau +t}} \right) ,
\\
Y_{\left( r+1 \right) \tau}&=WY_{r\tau}+G_{\left( r+1 \right) \tau}-G_{r\tau},
\end{aligned}
\end{equation}
where $W$ indicates a fixed weight matrix induced by the graph, and $\tau$ is the number of local updates between communications. \hy{It can be observed that the update of the model parameter $X$ relies on the accumulated tracking variables within the $r$-th round, whereas $Y$ uses only the single-step values without aligning with the accumulated quantity in the updates of $X$. This mismatch is inconsistent with the principle of gradient tracking.}

Motivated by these observations, we propose the spatio-temporal gradient tracking algorithm, {\myalg}, whose pseudo-code is given in Algorithm~\ref{Alg_ST_GT}. 
The key idea is illustrated in Fig.~\ref{fig:STGT}.
By caching the model parameters from the previous communication round and passing them to the next communication round for difference calculation (the link from iteration $r\tau$ to $r\tau+\tau$), the algorithm accumulates the temporal tracking variable $Z$, which is then mixed spatially via the weighting matrix $W_r$. This spatio-temporal gradient tracking mechanism effectively approximates centralized gradient descent with only a slight increase in storage overhead, improving convergence performance while preserving communication efficiency.
In particular,
{\myalg} can be rewritten in the following compact form: for $k\in[r\tau,\,\, r\tau+\tau-2]$,
\begin{equation}
\begin{aligned}
X_{k+1}&=X_{k}-\gamma Y_{k},
\\
Y_{k+1}&=Y_{k}+G_{k+1}-G_{k};
\end{aligned}
\end{equation}
and for the $ k = (r+1)\tau$,
\begin{equation}\label{Eq_ST_GT}
\begin{aligned}
Z_{r \tau}&=\frac{1}{\gamma \tau}\left( X_{r\tau}-X_{r\tau +\tau -1} +\gamma Y_{r\tau +\tau -1} \right),
\\
X_{\left( r+1 \right) \tau}&=W_r\left( X_{r\tau}-\tau \gamma Z_{r \tau} \right) ,
\\
Y_{\left( r+1 \right) \tau}&=W_r Z_{r \tau}+G_{\left( r+1 \right) \tau}-\frac{1}{\tau}\sum_{t=0}^{\tau -1}{G_{r\tau +t}.}
\end{aligned}
\end{equation}
Intuitively, by applying the dynamic consensus protocol with the doubly stochastic matrix $W_r$,  {\myalg} asymptotically tracks the running average of the global gradient within each period, i.e.,
\begin{equation}
\begin{aligned}
\bar{z}_{\left( r+1 \right) \tau}&:=\frac{1}{n}\sum_{i=1}^n{z_{i,\left( r+1 \right) \tau}}
\\
&=\frac{1}{n}\sum_{i=1}^n{\frac{1}{\tau}\sum_{t=0}^{\tau -1}{y_{i,r\tau +t}}}=\frac{1}{n}\sum_{i=1}^n{\frac{1}{\tau}\sum_{t=0}^{\tau -1}{g_{i,r\tau +t}}}.
\end{aligned}
\end{equation}
\hy{This enhances robustness to the gradient noise with only a slight increase in memory overhead, compared to GT methods with variance reduction (GT-VR) \citep{xin2020variance}, which typically incur higher memory or computational costs to approximate the full gradient (see Table \ref{Tab_comparison_of_related_work}).
Instead, DSGT \citep{pu2021distributed} and FlexGT \citep{huang2023computation} track the global gradient at a single step, i.e.,
\begin{equation}
\bar{y}_{\left( r+1 \right) \tau}:=\frac{1}{n}\sum_{i=1}^n{y_{i,\left( r+1 \right) \tau}}=\frac{1}{n}\sum_{i=1}^n{g_{i,\left( r+1 \right) \tau}}.
\end{equation}
Particularly, {\myalg} reduces to DSGT when $\tau=1$, and to FlexGT when communicating $Y$ instead of $Z$. Moreover, unlike K-GT \citep{liu2024decentralized}, ST-GT does not require any extra communication during the initialization phase.}

\begin{figure}[t]
\begin{minipage}[b]{0.95\linewidth}
  \centering
  \centerline{\includegraphics[width=0.9\linewidth]{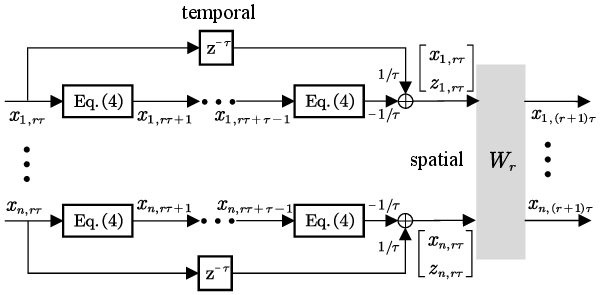}}
\end{minipage}
\caption{Illustration of the spatio-temporal gradient tracking. \hy{The link from time $r\tau$ to $r\tau+\tau$, passing through a memory element $\text{z}^{-\tau}$, illustrates how ST-GT tracks local gradient along the temporal dimension.}}
\label{fig:STGT}
\end{figure}

\begin{algorithm}[!htpb]
\caption{\textbf{{\myalg} (distributed)}}
\label{Alg_ST_GT}
\begin{algorithmic}[1]
\renewcommand{\algorithmicrequire}{\textbf{Initialization: }}
\REQUIRE {Initial points $x_{i,0}\in \mathbb{R}^p$ and $\tilde{g}_{i,0}=y_{i,0}=\nabla_x f_i\left( x_{i,0};\xi _{i,0} \right)$, number of local updates $\tau \geqslant 1$ and stepsize $\gamma >0$.
}
\
\FOR{round $r = 0,1,\cdots, R-1$, each node $i\in[n]$,}

\STATE {Re-initial the running-average gradient $\tilde{g}_{i,r\tau}=0$}

\FOR{$k=r\tau,r\tau+1,\cdots,(r+1)\tau-2$}
\STATE {Sample stochastic gradient $g_{i,r\tau+t}$.}

\STATE Perform local update:
\vspace{-0.2cm}
\begin{equation*}
\begin{aligned}
x_{i,k+1}&=x_{i,k}-\gamma y_{i,k},
\\
y_{i,k+1}&=y_{i,k}+{g}_{i,k+1}-{g}_{i,k},
\\
\tilde{g}_{i,k+1}&=\tilde{g}_{i,k}+g_{i,k+1}.
\end{aligned}
\end{equation*}
\ENDFOR

\STATE $z_{i,\left( r+1 \right)\tau}=\frac{1}{\gamma \tau}\left( x_{i,r\tau}-x_{i,r\tau+\tau-1}+\gamma y_{i,r\tau+\tau-1} \right) $.

\STATE Perform inter-node communication:

\vspace{-0.2cm}
\begin{equation*}
\begin{aligned}
x_{i,\left( r+1 \right)\tau}&=\sum_{j\in \mathcal{N} _i}{w^r_{i,j}\left( x_{j,r\tau}-\tau \gamma z_{j,\left( r+1 \right)\tau} \right)},
\\
y_{i,\left( r+1 \right)\tau}&=\sum_{j\in \mathcal{N} _i}{w^r_{i,j}z_{j,\left( r+1 \right)\tau}}+g_{i,\left( r+1 \right)\tau}-\frac{1}{\tau}\tilde{g}_{i,(r+1)\tau}.
\end{aligned}
\end{equation*}
\ENDFOR
\end{algorithmic}
\end{algorithm}

\section{Connection to Scaffold}\label{sec:recovering}

\begin{table*}
    \footnotesize 
    \begin{center}
        \caption{Relevant algorithms for solving Problem \eqref{Def_prob_con} with strongly-convex (SC) and nonconvex (NC) objective functions. We take the size of the model parameters as one unit and compare the overhead of related distributed optimization algorithms in terms of per-node computational (Comp.), communication (Comm.), and memory costs, as well as their communication complexity to achieve an accuracy of $\epsilon>0$. Here, $m$ denotes the total number of local samples at each node. “s/w” denotes the server–worker architecture, and `dist.' the distributed graph. }
        \label{Tab_comparison_of_related_work}
        {
        \begin{threeparttable}
            \begin{tabular}{c|c|c|c|c|c|c}
            \hline
                \rule{0pt}{10pt}
                {\textbf{Algorithm}}
                &{\textbf{Graph}}
                & {\textbf{Comp.}}
                & {\textbf{Comm.}}
                &{\textbf{Memory}}
                & {\textbf{Complexity} ($\tilde{\mathcal{O}}(\cdot)$ )}
                & {\textbf{Assum.}}
                \\
                \hline
                \rule{0pt}{15pt}
                \multirow{2}{*}{Scaffold \citep{karimireddy2020scaffold}} \tnote{a}&  \multirow{2}{*}{s/w}  & \multirow{2}{*}{1x} & \multirow{2}{*}{2x} & \multirow{2}{*}{6x} & $\frac{L}{\mu}+\frac{n}{s}+\frac{\sigma ^2}{\mu ^2\tau n\epsilon}\frac{n}{s}$ & SC
                \rule{0pt}{15pt}
                \\
                \rule{0pt}{15pt}
                 &   & &  &  & $\frac{\sigma ^2}{\tau \epsilon ^2n}\frac{n}{s}+\frac{1}{\epsilon}\left( \frac{n}{s} \right) ^{2/3}$ & NC
                 \rule{0pt}{15pt}
                \\
                 \hline
                \rule{0pt}{15pt}
                 FlexGT \citep{huang2023computation} \tnote{b} &  dist.  & $1$x & 2x &  4x & $\frac{L}{\left( 1-\rho \right) ^2\mu}+\frac{\sigma ^2/\tau}{\mu ^2n\epsilon}+\frac{\sqrt{L\sigma ^2}}{\sqrt{\mu ^3\left( 1-\rho \right) ^3\epsilon}}$ & SC
                 \rule{0pt}{15pt} 
                 \\
                \hline
                \rule{0pt}{15pt}
                 K-GT \citep{liu2024decentralized} &  dist. & 1x & 2x & 5x & $\frac{L}{\left( 1-\rho \right) ^2\epsilon}+\frac{L\sigma ^2}{\tau n\epsilon ^2}+\frac{L\sigma}{\left( 1-\rho \right) ^2\sqrt{\tau \epsilon ^3}}$ & NC
                 \rule{0pt}{15pt}
                \\
                \hline
                \rule{0pt}{15pt}
                GT-VR \citep{xin2020variance} \tnote{c} &  dist.  & \{1, m\}x & 2x & \{ 4+m, 4\}x & $\max \left\{ m, \frac{L}{\mu \left( 1-\rho \right) ^2} \right\} \log \frac{1}{\epsilon}$ & SC
                \rule{0pt}{15pt} 
                \\
                \hline
                \rule{0pt}{15pt}
                \multirow{2}{*}{{\myalg} \text{(This work)}}&  \multirow{2}{*}{dist.}  & \multirow{2}{*}{1x} & \multirow{2}{*}{2x} & \multirow{2}{*}{5x} & $\frac{L}{\left( 1-\rho \right) ^2\mu}+\frac{\sigma ^2}{\mu ^2n\tau \epsilon}+\sqrt{\frac{L\sigma ^2}{\mu ^3\left( 1-\rho \right) ^3\tau \epsilon}}$ & SC
                \rule{0pt}{15pt}
                \\
                \rule{0pt}{15pt}
                 &   & &  &  & $\frac{L}{\left( 1-\rho \right) ^2\epsilon}+\frac{\sigma ^2L}{n\tau \epsilon ^2}+\frac{L\sqrt{\sigma ^2/\tau}}{\sqrt{\left( 1-\rho \right) ^3\epsilon ^3}}+\frac{C_0}{\epsilon}$ & NC
                 \rule{0pt}{15pt}
                \\
                \hline
                
            \end{tabular}
            
            \begin{tablenotes}
                \item[a] \hy{Note that $s/n$ can be interpreted as a measure of connectivity in the server-worker topology. However, it is not directly comparable to $1-\rho$, as obtaining a closed-form expression for $\mathbb{E} \left[ \left\| W_r-\mathbf{J} \right\| ^2 \right]$ with $W_r$ defined in \eqref{Eq_Wr} is generally intractable.}
               \item[b] This rate corresponds to the case where no multi-round communication is employed, consistent with the setting considered in this paper.
               \item[c] This rate incurs higher memory or computational costs to approximate the full gradient and is obtained under the assumption of sample-wise smoothness of the objective function.
            \end{tablenotes}
        \end{threeparttable}
        }
    \end{center}
\end{table*}

In this section, we investigate an extended version of the Scaffold algorithm \citep{karimireddy2020scaffold} and build its connection with {\myalg} in the context of more general distributed and dynamic network settings.
\subsection{Extended Scaffold with \hy{Parameter $\gamma_c$}}

\begin{algorithm}[t]
\caption{\textbf{Scaffold$^+$ (server--worker)}}
\label{alg:scaffold}
\begin{algorithmic}[1]
\STATE Initialization: server and worker model parameters $x_0, x_{i,0}$, global and local control variables $c_0, c_{i,0}$, stepsizes $\gamma_g$, $\gamma_l$ and $\gamma_c$.

\FOR{$r=0,\dots,R-1$}

\STATE Sample worker nodes $S_r\subseteq \left[ 1,\dots, n \right] $.
\STATE {Send $x_{r\tau}$ and $c_{r\tau}$ to sampled nodes.}

\FOR{$i\in S_r$}

\STATE {initial the local mode $x_{i,r\tau}=x_{r\tau}$.}

\FOR{$k=r\tau,r\tau+1,\cdots,r\tau+\tau-2$}
\STATE {Sample stochastic gradient $g_{i,k}$.}
\STATE Local updates
\begin{equation*}
\begin{aligned}
x_{i,k+1}&=x_{i,k}-\gamma_l \left( c_k+g_{i,k} -c_{i,k} \right) ,
\\
x_{k+1}&=x_k, \,\,c_{i,k+1}=c_{i,k},\,\, c_{k+1}=c_k.
\end{aligned}
\end{equation*}
\ENDFOR

\STATE {For $k=r\tau+\tau-1$}
\begin{equation*}
\begin{aligned}
&x_{i,k+1/2}=x_{i,k}-\gamma _l\left( c_k+g_{i,k} -c_{i,k} \right) ,
\\
&c_{i,k+1}=c_{i,k}-c_k+\frac{1}{\tau\gamma_l}\left( x_k-x_{i,k+1/2} \right).
\end{aligned}
\end{equation*}
\ENDFOR

\STATE {Communicate to server and update}
\begin{equation*}
\begin{aligned}
&x_{k+1}=x_k+\frac{\gamma_g}{\left| S_r \right|}\sum_{j\in S_r}{\left( x_{j,k+1/2}-x_k \right) ,}
\\
&c_{k+1}=c_k+\frac{{\gamma_c}}{\left| S_r \right|}\sum_{j\in S_r}{\,\,\left( c_{j,k+1}-c_{j,k} \right)}.
\end{aligned}
\end{equation*}

\ENDFOR
\end{algorithmic}
\end{algorithm}

\hy{Scaffold is a popular federated learning algorithm designed to address data heterogeneity in non-i.i.d. settings \citep{karimireddy2020scaffold}.
It employs local and global control variables, $c_i$ for each worker node and $c$ for the server, to correct the client drift caused by data heterogeneity. Building on Scaffold, we propose an extended version in Algorithm~\ref{alg:scaffold}, named Scaffold$^+$, which introduces a tunable parameter $\gamma_c \geqslant 0$ (cf., line 13) in that algorithm. 
This modification generalizes the original Scaffold (recovered when $\gamma_c=\left| S_r \right|/n$, where $S_r\subseteq \left[ 1,\dots, n \right]$ denotes the set of sampled workers at each round $r$) and establishes a connection to {\myalg}.}
In particular, define
\[y_{i,k}:=c_k+g_{i,k} -c_{i,k},\]
which serves as the gradient tracking variable in {\myalg}. 
We show that Scaffold$^+$ can be interpreted as a spatiao-temporal gradient tracking method with dynamic networks in the following proposition.

\begin{prop}
\label{Prop_reform_scaffold}
Consider Scaffold$^+$ in Algorithm~\ref{alg:scaffold}. 
For each node $i \in S_r$ at the $r$-th communication round, we have $x_{i,r\tau}=x_{r\tau}$ and  $c_{i,r\tau} = c_{i,t(i)}$, where $t(i) \le r\tau$ denotes the most recent round at which node $i$ was sampled. Then, 
for iteration $k+1 = r\tau$,
\begin{equation}\label{Eq_scaffold}
\begin{aligned}
x_{i,k+1}&=\left( 1-\gamma _g \right) x_{\left( r-1 \right) \tau}
\\
&\quad+\frac{\gamma _g}{\left| S_{r-1} \right|}\sum_{j\in S_{r-1}}{\left( x_{j,(r-1)\tau}-\gamma _l\sum_{t=0}^{\tau -1}{y_{j,k-t}} \right)},
\\
y_{i,k+1}&=\left( 1-{\gamma_c} \right) c_{\left( r-1 \right) \tau}
\\
&\quad+\frac{\gamma_c}{\left| S_{r-1} \right|}\sum_{j\in S_{r-1}}{\frac{1}{\tau}\sum_{t=0}^{\tau -1}{y_{j,k-t}}}+g_{i,k+1}-c_{i,t\left( i \right)};
\end{aligned}
\end{equation}
for $k+1 = r\tau+1,\dots,r\tau+\tau-1$,
\begin{equation}
\begin{aligned}
x_{i,k+1}&=x_{i,k}-\gamma _ly_{i,k},
\\
y_{i,k+1}&=y_{i,k}+g_{i,k+1}-g_{i,k}.
\end{aligned}
\end{equation}
And, for $k+1=r\tau+\tau$,
\begin{equation}
c_{i,r\tau+\tau}=\frac{1}{\tau}\sum_{t=0}^{\tau -1}{g_{i,r\tau +t}}
\end{equation}
Moreover, for unsampled nodes $i \notin S_r$, all associated variables remain unchanged.
\end{prop}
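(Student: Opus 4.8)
The plan is to prove Proposition~\ref{Prop_reform_scaffold} by direct algebraic manipulation of the Scaffold$^+$ update rules, proceeding case by case over the position of the iteration $k+1$ within a communication round. The central bookkeeping device is the substitution $y_{i,k}:=c_k+g_{i,k}-c_{i,k}$, which recasts the drift-corrected local gradient direction as a single gradient-tracking variable. First I would unfold the local-update phase (iterations $k+1=r\tau+1,\dots,r\tau+\tau-1$), where lines 7--10 give $x_{i,k+1}=x_{i,k}-\gamma_l(c_k+g_{i,k}-c_{i,k})=x_{i,k}-\gamma_l y_{i,k}$ immediately. For the $y$-recursion I would note that within a round both control variables are frozen ($c_{k+1}=c_k$, $c_{i,k+1}=c_{i,k}$ by line 9), so $y_{i,k+1}-y_{i,k}=g_{i,k+1}-g_{i,k}$, yielding the claimed telescoping form. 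This phase is routine once the freezing of $c_k,c_{i,k}$ is made explicit.

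The control-variable update at $k+1=r\tau+\tau$ is the second piece. Here I would start from line 12, $c_{i,k+1}=c_{i,k}-c_k+\tfrac{1}{\tau\gamma_l}(x_k-x_{i,k+1/2})$, and substitute the half-step $x_{i,k+1/2}=x_{i,k}-\gamma_l y_{i,k}$. The key observation is that $x_k-x_{i,k+1/2}$ telescopes over the whole round: since $x_{i,\ell+1}=x_{i,\ell}-\gamma_l y_{i,\ell}$ for every local iteration and $x_k=x_{i,r\tau}$, one gets $x_k-x_{i,k+1/2}=\gamma_l\sum_{t=0}^{\tau-1}y_{i,r\tau+t}$. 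Combined with $c_k+c_{i,k}$ and using $y_{i,r\tau+t}=c_{r\tau}+g_{i,r\tau+t}-c_{i,r\tau}$ together with the fact that $c_k,c_{i,k}$ are constant across the round, the $c_k$ and $c_{i,k}$ terms cancel and the sum collapses to $c_{i,r\tau+\tau}=\tfrac{1}{\tau}\sum_{t=0}^{\tau-1}g_{i,r\tau+t}$, which is exactly the stated local-control update.

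The main obstacle is the communication step producing the $k+1=r\tau$ recursion in \eqref{Eq_scaffold}, because it mixes quantities from round $r-1$ that were aggregated at the \emph{server} with the re-initialization that happens when node $i$ is next sampled. I would handle the $x$-component first: the server update (line 12, outer block) reads $x_{r\tau}=x_{(r-1)\tau}+\tfrac{\gamma_g}{|S_{r-1}|}\sum_{j\in S_{r-1}}(x_{j,k+1/2}-x_{(r-1)\tau})$, and since $x_{i,r\tau}=x_{r\tau}$ for the freshly re-initialized node, substituting the telescoped half-step $x_{j,k+1/2}=x_{j,(r-1)\tau}-\gamma_l\sum_{t=0}^{\tau-1}y_{j,k-t}$ gives the $(1-\gamma_g)$ convex-combination form directly. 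The harder part is the $y$-component, which requires expanding $y_{i,r\tau}=c_{r\tau}+g_{i,r\tau}-c_{i,r\tau}$ and substituting the server control update $c_{r\tau}=(1-\gamma_c)c_{(r-1)\tau}+\tfrac{\gamma_c}{|S_{r-1}|}\sum_{j\in S_{r-1}}(c_{j,(r-1)\tau+\tau}-c_{j,(r-1)\tau})$; here I would invoke the already-derived identity $c_{j,(r-1)\tau+\tau}=\tfrac{1}{\tau}\sum_{t=0}^{\tau-1}g_{j,k-t}$ to rewrite each aggregated control difference, and use $c_{i,r\tau}=c_{i,t(i)}$ (the stated staleness of unsampled nodes' control variables) to identify the leftover term as $g_{i,r\tau}-c_{i,t(i)}$. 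The delicate point throughout is tracking \emph{which} round's index each stale variable carries for nodes that were not sampled at round $r-1$; I would make this rigorous by stating up front that line 14 freezes all variables of unsampled nodes, so that $c_{i,r\tau}=c_{i,t(i)}$ and the server aggregation in \eqref{Eq_scaffold} ranges only over $S_{r-1}$. Once these index-tracking facts are pinned down, assembling the two displayed lines of \eqref{Eq_scaffold} is a matter of collecting terms.
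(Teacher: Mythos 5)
Your proposal is correct and follows essentially the same route as the paper: a case-by-case algebraic unfolding of the Scaffold$^+$ updates using the substitution $y_{i,k}=c_k+g_{i,k}-c_{i,k}$, telescoping the local steps within a round, and exploiting the freezing of $c_k$, $c_{i,k}$ and of unsampled nodes' variables to collapse the control-variable update to the running gradient average and to obtain the $(1-\gamma_g)$ and $(1-\gamma_c)$ convex-combination forms. The only cosmetic difference is that you derive the server control update by first passing through $c_{j,r\tau}=\frac{1}{\tau}\sum_t g_{j,(r-1)\tau+t}$ and converting back to $y$, whereas the paper computes $c_{j,k+1}-c_{j,k}$ directly in terms of the $y$-average; the two are algebraically equivalent.
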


\begin{proof}
At the beginning of the $r$-th round with $k+1 = r\tau$, each node $i\in S_r$ receives information from the server, which is updated based on the information uploaded by the nodes sampled in $S_{r-1}$, i.e.,
\begin{equation}
\begin{aligned}
x_{i,k+1}&=x_{k+1}
\\
&=x_k+\frac{\gamma _g}{\left| S_{r-1} \right|}\sum_{j\in S_{r-1}}{\left( x_{j,k+1/2}-x_k \right)}
\\
&=\left( 1-\gamma _g \right) x_{\left( r-1 \right) \tau}
\\
&\quad+\frac{\gamma _g}{\left| S_{r-1} \right|}\sum_{j\in S_{r-1}}{\left( x_{j,\left( r-1 \right) \tau}-\gamma _l\sum_{t=0}^{\tau -1}{y_{j,k-t}} \right)}
\\
&=\frac{1}{\left| S_{r-1} \right|}\sum_{j\in S_{r-1}}{\left( x_{j,\left( r-1 \right) \tau}-\gamma _g\gamma _l\sum_{t=0}^{\tau -1}{y_{j,k-t}} \right)}.
\end{aligned}
\end{equation}

At the end of the $r$-th round with $k+1=r\tau+\tau$, the local control variable will be updated, i.e.,
\begin{equation}
\begin{aligned}
c_{i,k+1}&=c_{i,k}-c_k+\frac{1}{\tau \gamma _l}\left( x_k-x_{i,k+1/2} \right) 
\\
&=c_{i,k}-c_k+\frac{1}{\tau \gamma _l}\left( x_k-x_{i,k}+\gamma _ly_{i,k} \right) 
\\
&=c_{i,k}-c_k+x_{r\tau}-x_{i,r\tau}+\frac{1}{\tau \gamma _l}\sum_{t=0}^{\tau -1}{\gamma _ly_{i,k-t}}
\\
&=\frac{1}{\tau}\sum_{t=0}^{\tau -1}{\left( c_{i,k-t}-c_{k-t}+y_{i,k-t} \right)}=\frac{1}{\tau}\sum_{t=0}^{\tau -1}{g_{i,r\tau+t},}
\end{aligned}
\end{equation}
where we used the facts that $x_{i,r\tau}=x_{r\tau}=\cdots =x_{r\tau+\tau-1}$, $c_{i,k-\tau +1}=\cdots =c_{i,k}$ and $c_{k-\tau +1}=\cdots =c_k$. This equation shows that the local control variable equals the running average of the local stochastic gradient within a period.

Then, for the global control variable, we have
\begin{equation}
\begin{aligned}
c_{r\tau}&=c_k+\frac{\gamma _c}{\left| S_{r-1} \right|}\sum_{j\in S_{r-1}}{\,\,\left( c_{j,k+1}-c_{j,k} \right)}
\\
&=\left( 1-\gamma _c \right) c_k+\frac{\gamma _c}{\left| S_{r-1} \right|}\sum_{j\in S_{r-1}}{\,\,\frac{1}{\tau}\sum_{t=0}^{\tau -1}{y_{j,\left( r-1 \right) \tau +t}}}.
\end{aligned}
\end{equation}
During the local updates phase, it is easy to get $y_{i,k+1}=y_{i,k}+g_{i,k+1}-g_{i,k}$.
\end{proof}

\subsection{Connection between Scaffold$^+$ and {\myalg}}

Based on Proposition~\ref{Prop_reform_scaffold}, we establish the connections between Scaffold$^+$ and {\myalg} from the perspective of random communication topology and gradient approximation scheme, respectively. To simplify notations, we assume $s = \left| S_{r} \right| \leqslant n$ for all $ r \in [R-1]$.

\begin{figure}[t]

\begin{minipage}[b]{0.48\linewidth}
  \centering
  \centerline{\includegraphics[width=0.9\linewidth]{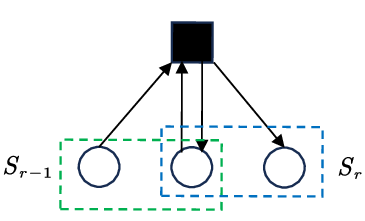}}
  \centerline{(a) Server--worker}\medskip
\end{minipage}
\begin{minipage}[b]{0.45\linewidth}
  \centering
  \centerline{\includegraphics[width=0.9\linewidth]{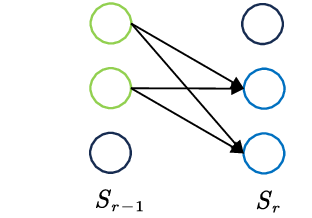}}
  \centerline{(b) Equivalent topology}\medskip
\end{minipage}
\caption{\hy{Illustration of the equivalent topology of Scaffold$^+$ under the random network perspective. Black solid rectangles represent server nodes, while circles represent worker nodes. Information is transmitted from nodes sampled at round $r-1$ (green box) to nodes sampled at time $r$ (blue box).}}
\label{fig:ps_2_de}
\end{figure}

{\textbf{Random network perspective.} 
Intuitively, the server in Scaffold$^+$ acts as a relay node, transmitting information from the workers in $S_{r-1}$ to those in $S_r$. The communication protocol can be described from the perspective of a bipartite graph, as illustrated in Fig.~\ref{fig:ps_2_de}. In particular, according to Proposition~\ref{Prop_reform_scaffold} with $\gamma_c=1$ and noticing that the variables of the unsampled workers are unchanged, we get an equivalent weight matrix at round $r$ as follows:
\begin{equation}\label{Eq_Wr}
\begin{aligned}
W_r = \frac{1}{s}\boldsymbol{e}_{S_r}\boldsymbol{e}_{S_{r-1}}^{\top}+\mathbf{I}-\mathrm{diag}\left( \boldsymbol{e}_{S_{r}} \right),
\end{aligned}
\end{equation}
where $\boldsymbol{e}_{S_r}$ is an $n$-dimensional column vector where the index of the sampled nodes $S_r$ is 1, and all other elements are 0. Note that $W_r$ is row-stochastic. Assuming that workers are sampled uniformly and independently at each round, the expectation of $W_r$ is
\begin{equation}
\begin{aligned}
\mathbb{E} \left[ W_r \right] = \frac{s}{n}\mathbf{J}+\frac{n-s}{n}\mathbf{I},
\end{aligned}
\end{equation}
which corresponds to a fully connected graph with spectral gap $1-s/n$. }
\hy{This random network perspective provides a foundation for the unified analysis of algorithms under server–worker architectures with partial participation and distributed topologies.}

{\textbf{Gradient tracking perspective.} As shown in \eqref{Eq_scaffold}, setting $\gamma_c = 1$, 
Scaffold$^+$ has the same spatio-temporal gradient-tracking scheme as that of  {\myalg}. Particularly, for $i\in S_r$, we have 
\begin{equation}
\begin{aligned}
y_{i,r\tau}=\frac{1}{s}\sum_{j\in S_{r-1}}{\frac{1}{\tau}\sum_{t=0}^{\tau -1}{y_{j,\left( r-1 \right) \tau +t}}}+g_{i,r\tau}-c_{i,t\left( i \right)};
\end{aligned}
\end{equation}
and for $i\notin S_r$, $y_{i,r\tau}=y_{i,r\tau -1}=\cdots =y_{i,\left( r-1 \right) \tau}$. By proper initialization, we have
\begin{equation}
\begin{aligned}
\frac{1}{n}\sum_{i=1}^n{\frac{1}{\tau}\sum_{t=0}^{\tau -1}{\mathbb{E} \left[ y_{i,r\tau +t} \right]}}=\frac{s}{n}\frac{1}{n}\sum_{i=1}^n{\frac{1}{\tau}\sum_{t=0}^{\tau -1}{\mathbb{E} \left[ g_{i,r\tau +t} \right]},}
\end{aligned}
\end{equation}
illustrating the similar gradient tracking property as in {\myalg}. Note that the contribution from unsampled nodes is slightly different compared to {\myalg}. \hy{Instead, by setting $\gamma_c = s/n$ as in the original Scaffold (cf. line 13 in Algorithm \ref{alg:scaffold}) and noting that $c_k$ stores the average of the most recent $c_{i,k}$ values and is updated every $\tau$ iterations, we obtain}
\begin{equation}
\begin{aligned}
c_{r\tau}=\frac{\mathbf{1}^{\top}}{n}\left( \left[ \begin{array}{c}
	c_{1,t\left( 1 \right)}\\
	\vdots\\
	c_{n,t\left( n \right)}\\
\end{array} \right] +\left[ \begin{array}{c}
	\vdots\\
	c_{j,r\tau}-c_{j,t\left( j \right)}\\
	\vdots\\
\end{array} \right] _{j\in S_{r-1}} \right) ,
\end{aligned}
\end{equation}
which \hy{serves as the same role as the full gradient table in SAGA} \citep{defazio2014saga}.

These observations demonstrate that Scaffold for federated learning can be interpreted as a spatio-temporal gradient tracking method operating over random networks. In particular, when $s = n$, Scaffold becomes exactly equivalent to {\myalg} with $W_r = \mathbf{J}$. This perspective further offers a unified framework for understanding other federated learning algorithms, such as FedAvg.

\section{Convergence Results}
\label{sec:conv_results}
In this section, we establish the convergence properties of {\myalg} under several standard assumptions on the objective functions and their gradients.

\subsection{Assumptions}

\begin{assum}[Convexity]\label{Ass_cov}
Each $f_{i}: \mathbb{R}^p\rightarrow \mathbb{R}$ is $\mu$-strongly convex, i.e., for any $x, x'\in \mathbb{R}^p$, there exists a constant $\mu > 0$ such that
\begin{equation}
\begin{aligned}
\left< \nabla f_i\left( x \right) -\nabla f_i\left( x^{\prime} \right), x-x^{\prime} \right> \geqslant \mu \left\| x-x^{\prime} \right\| ^2.
\end{aligned}
\end{equation}
\end{assum}

\begin{assum}[Smoothness]\label{Ass_smoothness}
Each $f_{i}: \mathbb{R}^p\rightarrow \mathbb{R}$ is $L$-smooth, i.e., for any $x, x'\in \mathbb{R}^p$,  there exists a constant $L > 0$ such that
\begin{equation}
\begin{aligned}
& \left\| \nabla f_i\left( x \right) -\nabla f_i\left( x' \right) \right\|\leqslant L\left\| x-x' \right\|.
\end{aligned}
\end{equation}
\end{assum}

\begin{assum}
[Bounded variance]\label{Ass_sampling}
For each node $i$, the stochastic gradient is unbiased, i.e., $\mathbb{E} _{\xi _i\sim \mathcal{D} _i}\left[ \nabla f_i\left( x;\xi _i \right) \right] =\nabla f_i\left( x \right) , \forall x\in \mathbb{R} ^p$, and there exists a constant $\sigma \geqslant0$ such that
\begin{equation}\label{Def_noise}
\begin{aligned}
\mathbb{E}_{\xi _i\sim \mathcal{D} _i} \left[ \left\| \nabla f_i\left( x;\xi _i \right) -\nabla f_i\left( x \right) \right\| ^2 \right] \leqslant \sigma ^2.
\end{aligned}
\end{equation}
\end{assum}

\subsection{Strongly convex Case}
We are now ready to give the convergence results of the {\myalg} algorithm.
To this end, we first define a Lyapunov function as follows:
\begin{equation}\label{Eq_Lyapunov}
\begin{aligned}
V_k=\left\| \bar{x}_k-x^* \right\| ^2+c_x\left\| X_k-\mathbf{1}\bar{x}_k \right\| ^2+c_y\left\| \varUpsilon _{k} \right\| ^2,
\end{aligned}
\end{equation}
where 
\begin{equation}
\varUpsilon _{k}:=Y_{k}-G_{k}-\nabla f\left( \mathbf{1}\bar{x}_{k} \right) +\nabla F\left( \mathbf{1}\bar{x}_{k} \right), 
\end{equation}
and the coefficients $c_x$ and $c_y$ are designed as:
\begin{equation}\label{Eq_Lya_coff}
\begin{aligned}
c_x=\frac{80\gamma \tau L}{n\left( 1-\rho \right)},\quad  c_y=\frac{3556\gamma ^3\tau^3L}{n\left( 1-\rho \right) ^3}.
\end{aligned}
\end{equation}

Then, for the strongly convex objective functions, we have the following theorem.
\begin{thm}[Strongly convex case]\label{Thm_sc}
Suppose Assumptions~\ref{Ass_graph}--\ref{Ass_sampling} hold. Let the stepsize 
$\gamma =\mathcal{O} \left( \frac{\left( 1-\rho \right) ^2}{\tau L} \right)$.
Then, we have
\begin{equation}\label{Eq_thm_sc_conv}
\begin{aligned}
\mathbb{E} \left[ \left\| V_{\left( r+1 \right) \tau} \right\| ^2 \right] &\leqslant \left( 1-\min \left\{ \frac{\mu \gamma}{2},\frac{1-\rho}{8} \right\} \right) \mathbb{E} \left[ \left\| V_{r\tau} \right\| ^2 \right] 
\\
&\quad+\mathcal{O} \left( \gamma ^2\tau\frac{\sigma ^2}{n}+\frac{\gamma ^3\tau^2L}{\left( 1-\rho \right) ^3}\sigma ^2 \right).
\end{aligned}
\end{equation}
Further, the {\myalg} algorithm achieves an accuracy of $\epsilon>0$ after at least the following rounds of communications:
\begin{equation}\label{Eq_thm_sc_comm}
\begin{aligned}
R=\tilde{\mathcal{O}}\left( \frac{L}{\left( 1-\rho \right) ^2\mu}+\frac{\sigma ^2}{\mu ^2n\tau \epsilon}+\sqrt{\frac{L\sigma ^2}{\mu ^3\left( 1-\rho \right) ^3\tau \epsilon}} \right) .
\end{aligned}
\end{equation}

\end{thm}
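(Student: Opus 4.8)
The plan is to carry out a one-round Lyapunov analysis: track each of the three terms of $V_k$ in \eqref{Eq_Lyapunov} over the interval $[r\tau,(r+1)\tau]$, derive a coupled system of three scalar recursions, and fold them into the single contraction \eqref{Eq_thm_sc_conv} using the engineered weights $c_x,c_y$ from \eqref{Eq_Lya_coff}. All estimates are taken in expectation, and Assumption~\ref{Ass_graph} enters only through the averaged mixing contraction $\mathbb{E}[\|W_r M-\mathbf{1}\bar{M}\|^2]\le\rho\|M-\mathbf{1}\bar{M}\|^2$, invoked solely at the communication step $k=(r+1)\tau$. I would begin with the optimality gap $\|\bar{x}_k-x^*\|^2$. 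Since $W_r$ is doubly stochastic, mixing preserves the average, so over one round $\bar{x}_{(r+1)\tau}=\bar{x}_{r\tau}-\tau\gamma\bar{z}_{r\tau}$ with $\bar{z}_{r\tau}$ equal to the round-averaged stochastic gradient. Decomposing $\bar{z}_{r\tau}$ into its full-gradient mean and a conditionally zero-mean noise, then invoking $\mu$-strong convexity (Assumption~\ref{Ass_cov}) and $L$-smoothness (Assumption~\ref{Ass_smoothness}), yields a contraction with effective per-round progress $\propto\tau\gamma$, a consensus penalty $\mathcal{O}(\gamma\tau L)\,\|X_{r\tau}-\mathbf{1}\bar{x}_{r\tau}\|^2$ from evaluating local gradients away from $\bar{x}$, and a variance term $\mathcal{O}(\gamma^2\tau\sigma^2/n)$. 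The $1/\tau$ improvement that distinguishes the method survives here precisely because the tracking update averages $\tau$ conditionally independent per-step noises rather than propagating a single noisy gradient.

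Next I would control the consensus error $\|X_k-\mathbf{1}\bar{x}_k\|^2$ and the tracking error $\|\varUpsilon_k\|^2$. For both I would first establish an \emph{inner-loop stability lemma} bounding the accumulated drift $\sum_{t=0}^{\tau-1}\|X_{r\tau+t}-\mathbf{1}\bar{x}_{r\tau+t}\|^2$ and the accumulated gradient change over the $\tau$ mixing-free local steps by their values at the start of the round; the stepsize ceiling $\gamma=\mathcal{O}((1-\rho)^2/(\tau L))$ is exactly the condition keeping the geometric growth of the unmixed recursion below a constant. Substituting this estimate into the mixing step, and using smoothness to tie the perturbation of $\varUpsilon$ to the movement of $X$, gives two recursions of the form $\rho\bigl(1+\mathcal{O}(\gamma\tau L)\bigr)$-contraction plus cross terms in the remaining Lyapunov components and a noise floor $\mathcal{O}\bigl(\gamma^3\tau^2 L\sigma^2/(1-\rho)^3\bigr)$.

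Finally I would assemble $V$ with the weights $c_x,c_y$ chosen so that every off-diagonal cross term (consensus error in the optimality-gap bound, tracking error in the consensus bound, and so on) is dominated by the diagonal contraction of the term it feeds; the $(1-\rho)$ powers in the denominators of \eqref{Eq_Lya_coff} are what cancel the $1/(1-\rho)$ amplification produced by the mixing step. This yields the stated one-step bound \eqref{Eq_thm_sc_conv}. Unrolling the recursion gives $\mathbb{E}[V_{R\tau}]\lesssim(1-\delta)^R V_0 + B/\delta$ with effective rate $\delta=\min\{\mu\gamma\tau/2,(1-\rho)/8\}$ and additive floor $B=\mathcal{O}(\gamma^2\tau\sigma^2/n+\gamma^3\tau^2 L\sigma^2/(1-\rho)^3)$; tuning $\gamma$ to balance the deterministic ceiling $\gamma\lesssim(1-\rho)^2/(\tau L)$ against the two noise terms and the target accuracy $\epsilon$ then produces the three summands of \eqref{Eq_thm_sc_comm}.

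The step I expect to be the main obstacle is the circular coupling of the second paragraph: the tracking error drives the local drift of $X$, the drift determines the gradient change, and the gradient change feeds back into the tracking error. Breaking this loop requires the stability lemma to hold uniformly over the round with constants that do not degrade with $\tau$, which is only achievable under the tight ceiling $\gamma=\mathcal{O}((1-\rho)^2/(\tau L))$; ensuring that the $1/\tau$ variance reduction persists through this coupling — rather than collapsing back to the $\sigma^2$ scaling of standard gradient tracking — is the delicate point of the whole argument.
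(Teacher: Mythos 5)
Your proposal follows essentially the same route as the paper: a client-divergence (inner-loop stability) lemma over the $\tau$ mixing-free local steps, separate per-round contraction recursions for the optimality gap, the consensus error, and the tracking error $\varUpsilon$, assembly into the Lyapunov function with the weights $c_x,c_y$ of \eqref{Eq_Lya_coff} chosen to absorb the cross couplings, and a final stepsize tuning to balance the linear rate against the two noise floors. The structure, the role of the stepsize ceiling $\gamma=\mathcal{O}((1-\rho)^2/(\tau L))$, and the source of the $\sigma^2/\tau$ improvement (averaging $\tau$ conditionally independent noises in the tracked quantity) all match the paper's argument.
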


\begin{proof}
    See Appendix~\ref{Subsec_proof_Thm_sc}.
\end{proof}

\begin{rem}
Theorem~\ref{Thm_sc} shows that {\myalg} converges linearly to a neighborhood of the optimal solution for strongly-convex and smooth objectives. The neighborhood size has two components: one matching the centralized SGD algorithm, which achieves linear speedup with respect to $n$, and the other induced by the network topology. More importantly, the communication complexity of {\myalg} achieves a linear speedup with respect to the number of local updates $\tau$, and is significantly lower than that of the FlexGT algorithm. Specifically, the topology-dependent term in \eqref{Eq_thm_sc_comm} is scaled by $1/\tau$, thereby reducing the communication cost, particularly in cases of weak network connectivity, i.e., when $\rho \to 1$. Furthermore, in the absence of the gradient noise, {\myalg} attains exact linear convergence. A more detailed comparison with existing related algorithms in terms of computation, storage, and communication complexity is summarized in Table~\ref{Tab_comparison_of_related_work}.
\end{rem}

\subsection{Nonconvex Case}
When the objective functions are not convex, the following theorem shows a sublinear convergence rate of {\myalg}.

\begin{thm}[Nonconvex case]\label{Thm_nc}
Suppose Assumptions~\ref{Ass_graph},~\ref{Ass_smoothness} and~\ref{Ass_sampling} hold. Let the stepsize satisfy 
\begin{equation}
\gamma \leqslant \min \left\{ \frac{1-\rho}{178\tau L}, \frac{\left( 1-\rho \right) ^2}{625\sqrt{\rho}\tau L} \right\}.
\end{equation}
Then, for the {\myalg} algorithm, we have
\begin{equation}\label{Eq_thm_nc_conv}
\begin{aligned}
&\frac{1}{nR}\sum_{r=0}^{R-1}{\mathbb{E} \left[ \left\| \nabla F\left( \mathbf{1}\bar{x}_{r\tau} \right) \right\| ^2 \right]}
\\
&\leqslant \frac{16\left( f\left( \bar{x}_0 \right) -f\left( \bar{x}_{R\tau} \right) \right)}{\gamma \tau R}+\frac{\mathbb{E} \left[ \left\| \varUpsilon _0 \right\| ^2 \right]}{nR}
\\
&\quad +\mathcal{O} \left( \gamma L\frac{\sigma ^2}{n}+\frac{\tau \gamma ^2L^2\sigma ^2}{\left( 1-\rho \right) ^3} \right).
\end{aligned}
\end{equation}
Further, the {\myalg} algorithm achieves an accuracy of $\epsilon>0$ to a stationary point after at least the following rounds of communications
\begin{equation}\label{Eq_thm_nc_comm}
\begin{aligned}
R=\mathcal{O} \left( \frac{L}{\left( 1-\rho \right) ^2\epsilon}+\frac{\sigma ^2L}{n\tau \epsilon ^2}+\frac{L\sigma}{\sqrt{\tau \left( 1-\rho \right) ^3\epsilon ^3}}+\frac{C_0}{\epsilon} \right) , 
\end{aligned}
\end{equation}
where $C_0:=\frac{1}{n}\mathbb{E} \left[ \left\| \varUpsilon _0 \right\| ^2 \right] \left( f\left( \bar{x}_0 \right) -f^* \right) ^{-1}$.
\end{thm}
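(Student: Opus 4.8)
The plan is to combine a one-round descent inequality for the averaged iterate with coupled recursive bounds on the consensus error and the gradient-tracking error, and then telescope over the $R$ rounds. First I would derive a descent lemma for $f(\bar{x}_{r\tau})$ from Assumption~\ref{Ass_smoothness}. Because the averaged dynamics of \eqref{Eq_ST_GT} satisfy $\bar{x}_{(r+1)\tau}=\bar{x}_{r\tau}-\gamma\tau\bar{z}_{r\tau}$ with $\bar{z}_{r\tau}=\frac{1}{n\tau}\sum_{i}\sum_{t=0}^{\tau-1}g_{i,r\tau+t}$, $L$-smoothness and the unbiasedness in Assumption~\ref{Ass_sampling} give
\[
\mathbb{E}\!\left[f(\bar{x}_{(r+1)\tau})\right]\le f(\bar{x}_{r\tau})-\frac{\gamma\tau}{2}\left\|\nabla f(\bar{x}_{r\tau})\right\|^2+\mathcal{O}\!\left(\frac{\gamma L}{n}\right)\mathbb{E}\!\left[\left\|X_{r\tau}-\mathbf{1}\bar{x}_{r\tau}\right\|^2\right]+\frac{L\gamma^2\tau}{2n}\sigma^2,
\]
where splitting $\bar{z}_{r\tau}$ into its conditional mean plus a zero-mean martingale term isolates the descent term, a residual proportional to the deviation of the local iterates over the $\tau$ inner steps, and a variance floor scaled by $1/(n\tau)$ that is precisely the reduction afforded by the running-average construction.

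Next I would control the two error quantities by round-to-round recursions. For the consensus error, Assumption~\ref{Ass_graph} (contraction $\rho$ in expectation of $W_r$) applied to \eqref{Eq_ST_GT} yields a bound of the form $\mathbb{E}\!\left[\|X_{(r+1)\tau}-\mathbf{1}\bar{x}_{(r+1)\tau}\|^2\right]\le\frac{1+\rho}{2}\mathbb{E}\!\left[\|X_{r\tau}-\mathbf{1}\bar{x}_{r\tau}\|^2\right]+\mathcal{O}\!\big(\tfrac{\gamma^2\tau^2}{1-\rho}\big)\,\mathbb{E}\!\left[\|\varUpsilon_{r\tau}\|^2+\|\nabla F(\mathbf{1}\bar{x}_{r\tau})\|^2\right]+\mathcal{O}(\gamma^2\tau\sigma^2)$, where the slack $\tfrac{1-\rho}{2}$ comes from the spectral gap and the $\gamma^2\tau^2$ factor from accumulating $\tau$ local updates inside a round. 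In parallel I would bound the tracking error $\mathbb{E}\!\left[\|\varUpsilon_{r\tau}\|^2\right]$ with a similar recursion: using $L$-smoothness to relate the differences of stochastic gradients across local steps to the corresponding iterate increments, this produces terms in the consensus error and the gradient norm together with a noise term, again carrying factors of $1/(1-\rho)$.

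I would then assemble a Lyapunov potential $\Phi_r=f(\bar{x}_{r\tau})+a\,\mathbb{E}\!\left[\|X_{r\tau}-\mathbf{1}\bar{x}_{r\tau}\|^2\right]+b\,\mathbb{E}\!\left[\|\varUpsilon_{r\tau}\|^2\right]$, with weights $a,b$ (analogous to the $c_x,c_y$ of \eqref{Eq_Lya_coff} used in the strongly convex case) chosen so that the positive coupling terms generated in the descent step are absorbed by the contraction slack of the two error recursions. The stepsize constraint $\gamma\le\min\{(1-\rho)/(178\tau L),\,(1-\rho)^2/(625\sqrt{\rho}\tau L)\}$ is exactly what makes this absorption feasible while keeping the net coefficient of $\|\nabla f(\bar{x}_{r\tau})\|^2$ strictly negative. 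Telescoping the resulting one-round inequality from $r=0$ to $R-1$ collapses the function-value terms to $f(\bar{x}_0)-f(\bar{x}_{R\tau})$ and the initial error terms to the $\mathbb{E}[\|\varUpsilon_0\|^2]/(nR)$ contribution once everything is divided by $\gamma\tau R$, which gives \eqref{Eq_thm_nc_conv}. The complexity bound \eqref{Eq_thm_nc_comm} then follows by substituting the largest admissible $\gamma$ and balancing the three residual terms $\gamma L\sigma^2/n$, $\tau\gamma^2L^2\sigma^2/(1-\rho)^3$, and the transient against $\epsilon$, with the initial-condition term producing the $C_0/\epsilon$ summand.

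The main obstacle I anticipate is the tight two-way coupling between the consensus and tracking recursions: each error feeds into the other and both feed back into the descent inequality, so closing the loop requires choosing $a$ and $b$ so that the induced $2\times2$ system of per-round recursions is jointly contractive, and carefully bookkeeping the powers of $1/(1-\rho)$ to land on the claimed $1/(1-\rho)^3$ topology dependence rather than a worse exponent. A secondary difficulty is handling the $\tau$ inner local steps cleanly: one must bound the accumulated drift of $y_{i,k}$ and $x_{i,k}$ across a full round in terms of round-indexed quantities and show that the running-average gradient lowers the variance contribution by the factor $1/\tau$, which is the crux of the claimed improvement over schemes that track only a single-step gradient.
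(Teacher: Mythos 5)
Your overall architecture is sound and matches the paper's in its ingredients (a smoothness-based descent lemma for $\bar{x}_{r\tau}$, an intra-round drift bound, and coupled per-round recursions for the consensus error and the tracking error $\varUpsilon_{r\tau}$, closed under the stated stepsize condition), but the way you combine them is genuinely different from what the paper does. You propose a per-round Lyapunov potential $\Phi_r=f(\bar{x}_{r\tau})+a\,\mathbb{E}[\|X_{r\tau}-\mathbf{1}\bar{x}_{r\tau}\|^2]+b\,\mathbb{E}[\|\varUpsilon_{r\tau}\|^2]$ and telescope a single contractive inequality — the strategy the paper reserves for the strongly convex case. For the nonconvex case the paper instead telescopes the descent lemma first to obtain \eqref{Eq_des_lem_accu}, sums each of the error recursions (Lemmas~\ref{Lem_cons_err} and~\ref{Lem_tracking_err}) over $r$ to get the time-averaged bounds \eqref{Eq_accu_cons_error_1}--\eqref{Eq_accu_tracking_error_1}, and then decouples the two averaged sums by substituting one into the other (Lemmas~\ref{Lem_cons_err_accu} and~\ref{Lem_tracking_err_accu}) before plugging back into the telescoped descent inequality. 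The two routes are essentially linear-algebraically equivalent (a constant-weight Lyapunov combination summed over $r$ is the same $2\times 2$ system you describe), so yours buys a cleaner one-line contraction statement at the cost of having to exhibit admissible weights $a,b$ explicitly, while the paper's sum-then-decouple route avoids choosing weights but requires the mutual-substitution step to control the cross terms; both land on the same $1/(1-\rho)^3$ exponent and the $C_0/\epsilon$ term from the initial $\varUpsilon_0$.

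Two points of imprecision worth fixing. First, your displayed descent inequality carries only $\mathbb{E}[\|X_{r\tau}-\mathbf{1}\bar{x}_{r\tau}\|^2]$, whereas the residual is really $\sum_{t=0}^{\tau-1}\mathbb{E}[\|X_{r\tau+t}-\mathbf{1}\bar{x}_{r\tau}\|^2]$; after invoking the intra-round drift bound (Lemma~\ref{Lem_client_diver}) this necessarily injects an additional $\gamma^3\tau^3 L^2\,\mathbb{E}[\|\varUpsilon_{r\tau}\|^2]/n$-type term and an extra noise term into the descent step — you acknowledge this in words, but it must appear in the inequality for the Lyapunov weights to close. Second, you attribute the $1/\tau$ variance reduction to the $\gamma^2\tau\sigma^2/n$ floor in the descent lemma, but that floor is common to any local-update scheme; the place where the running-average construction actually pays off is the tracking recursion, where the injected noise per round is $\mathcal{O}(n\sigma^2/\tau)$ (because $Y_{(r+1)\tau}$ is corrected by $\frac{1}{\tau}\sum_t G_{r\tau+t}$ rather than a single-step gradient), and it is this $\sigma^2/\tau$ that propagates through the consensus recursion to produce the $\tau\gamma^2L^2\sigma^2/(1-\rho)^3$ term in \eqref{Eq_thm_nc_conv} instead of $\tau^2\gamma^2L^2\sigma^2/(1-\rho)^3$. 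You flag this as the crux but do not locate it; without that $1/\tau$ in the tracking-noise injection the claimed complexity does not follow.
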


\begin{proof}
    See Appendix~\ref{Subsec_proof_Thm_nc}.
\end{proof}

\begin{rem}
    \hy{Theorem 2 shows that {\myalg} converges to a neighborhood of a local optimum at a sublinear rate, where the neighborhood size depends on the gradient noise level, objective properties, and network connectivity. Compared with K-GT \citep{liu2024decentralized}, {\myalg} reduces the network dependence from $1/(1-\rho)^2$ to $1/(1-\rho)^{3/2}$ and achieves scale invariance to gradient noise by a factor of $\tau$.}
\end{rem}

\section{Numerical Results}
\label{sec:simulation}

\subsection{Synthetic Example}
To validate our theoretical findings and illustrate the effectiveness of {\myalg}, we consider the following distributed ridge regression problem over a network of $n=32$ nodes:
\begin{equation}\label{Prob_RR}
\begin{aligned}
\underset{x\in \mathbb{R}^p}{\min}f\left( x \right) =\frac{1}{n}\sum_{i=1}^n{\underset{=:f_i}{\underbrace{\left( \mathbb{E} _{d_i}\left[ \left( \theta_{i}^{\top}x-d_i \right) ^2+\frac{\mu}{2}\left\| x \right\| ^2 \right] \right) }}},
\end{aligned}
\end{equation}
where $\mu>0$ is the regularization parameter, $\theta_i\in \left[ 0,1 \right] ^p$ denotes the feature parameters of node $i$ with dimension $p=10$, and $d_i\sim \mathcal{N} \left( \bar{d}_i,\sigma ^2 \right)$ with $\bar{d}_i\in \left[ 0,1 \right]$. The algorithms can obtain an unbiased noisy gradient $\nabla f_i\left( x_{i,k} \right) +\delta _{i,k}$ with $\delta _{i,k}\sim \mathcal{N} \left( 0,\sigma^2 \right) $ at each iteration $k$.

We compare the convergence performance of FlexGT, Scaffold, and {\myalg} in terms of the residual $\left\| \bar{x}_k - x^* \right\|^2$ as shown in Fig.~\ref{fig:res}. For the communication topology, Scaffold samples $s=\{4, 16\}$ nodes at each round, while in FlexGT and {\myalg}, each node in the exponential graph has $s-1$ neighbors. The number of local updates is set to $\tau=50$, the stepsize to $\gamma=0.4$, and the noise variance to $\sigma^2=0.1$ for all algorithms.
The results show that {\myalg} achieves the lowest steady-state error. While Scaffold attains a smaller error than FlexGT, its performance remains inferior to {\myalg} due to the uncertainty introduced by its random topology. Moreover, as the number of communicating nodes increases from 4 to 16, the performance of all algorithms approaches that of the centralized setting, and the gap among the three algorithms becomes smaller.

\hy{To illustrate the effect of local update frequency on {\myalg}, Fig.~\ref{fig:synthetic_impact_of_tau} shows the convergence behavior measured in communication rounds under different network topologies for $\tau = 25$, $50$, and $100$. The stepsizes follow the proportional relationship recommended in Theorem~\ref{Thm_sc}, with $\gamma = 0.8$, $0.4$, and $0.2$. The results confirm that {\myalg} achieves an almost linear speed-up with respect to $\tau$.}

\subsection{\hy{Training ResNet-18 on CIFAF-10}}

We further evaluate the performance of {\myalg} on the real-world dataset CIFAR-10 \citep{krizhevsky2009learning}. Specifically, we perform distributed training of ResNet-18 using multiple processes to emulate nodes on a single A40 GPU, with the Gloo backend handling inter-node communication. The training data are unevenly partitioned across eight nodes, where each node contains samples from only eight of the ten classes, leading to heterogeneous local datasets. The learning rate is set to 1, the batch size is 200, and the number of local updates is set to $\tau=25$.
We compare the training loss and testing accuracy of FlexGT, Scaffold, and {\myalg} across communication rounds. Figure~\ref{fig:cifar} shows that {\myalg} achieves the best performance in both training and testing. Scaffold, which also incorporates spatio-temporal gradient tracking, attains the second-best results, whereas FlexGT, relying solely on local updates, performs the worst. These results further corroborate our theoretical analysis and confirm the effectiveness of the proposed algorithm.

\begin{figure}[t]

\begin{minipage}[b]{0.49\linewidth}
  \centering
  \centerline{\includegraphics[width=1\linewidth]{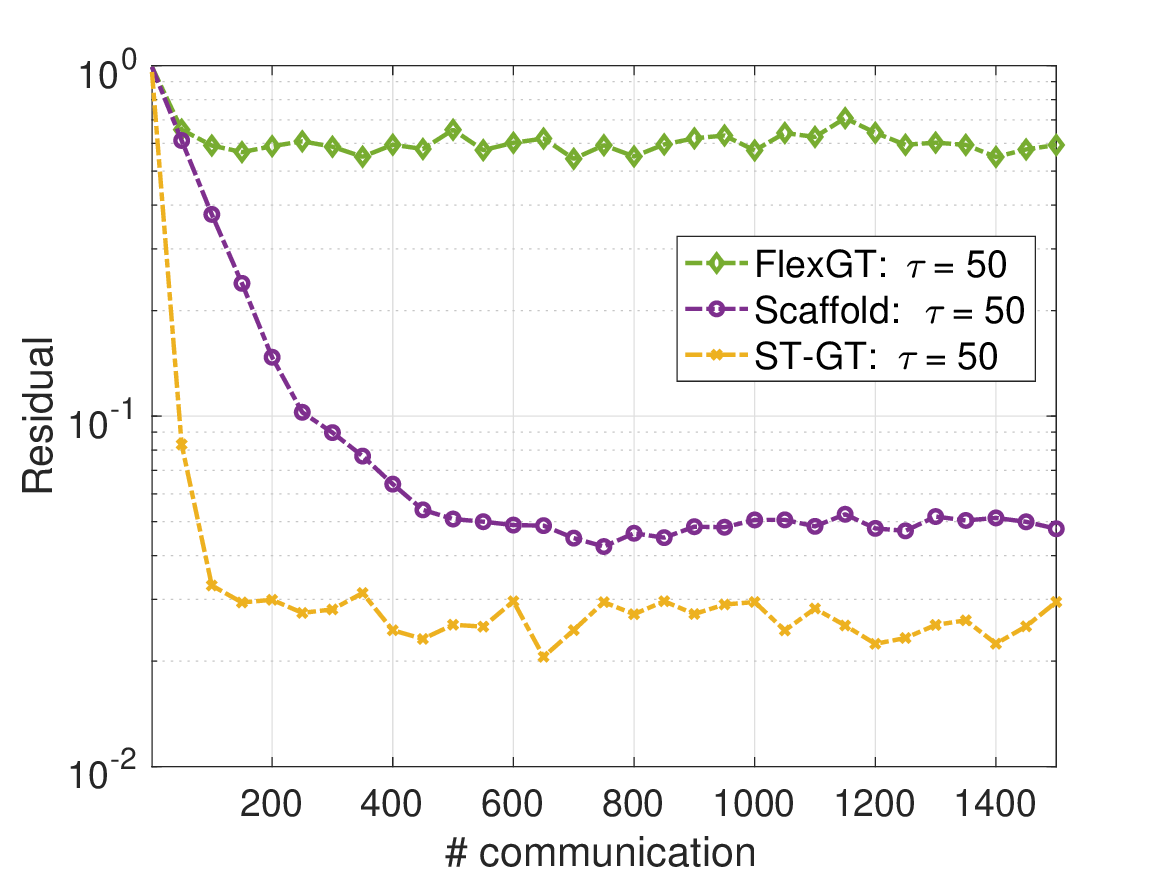}}
\end{minipage}
\begin{minipage}[b]{0.49\linewidth}
  \centering
  \centerline{\includegraphics[width=1\linewidth]{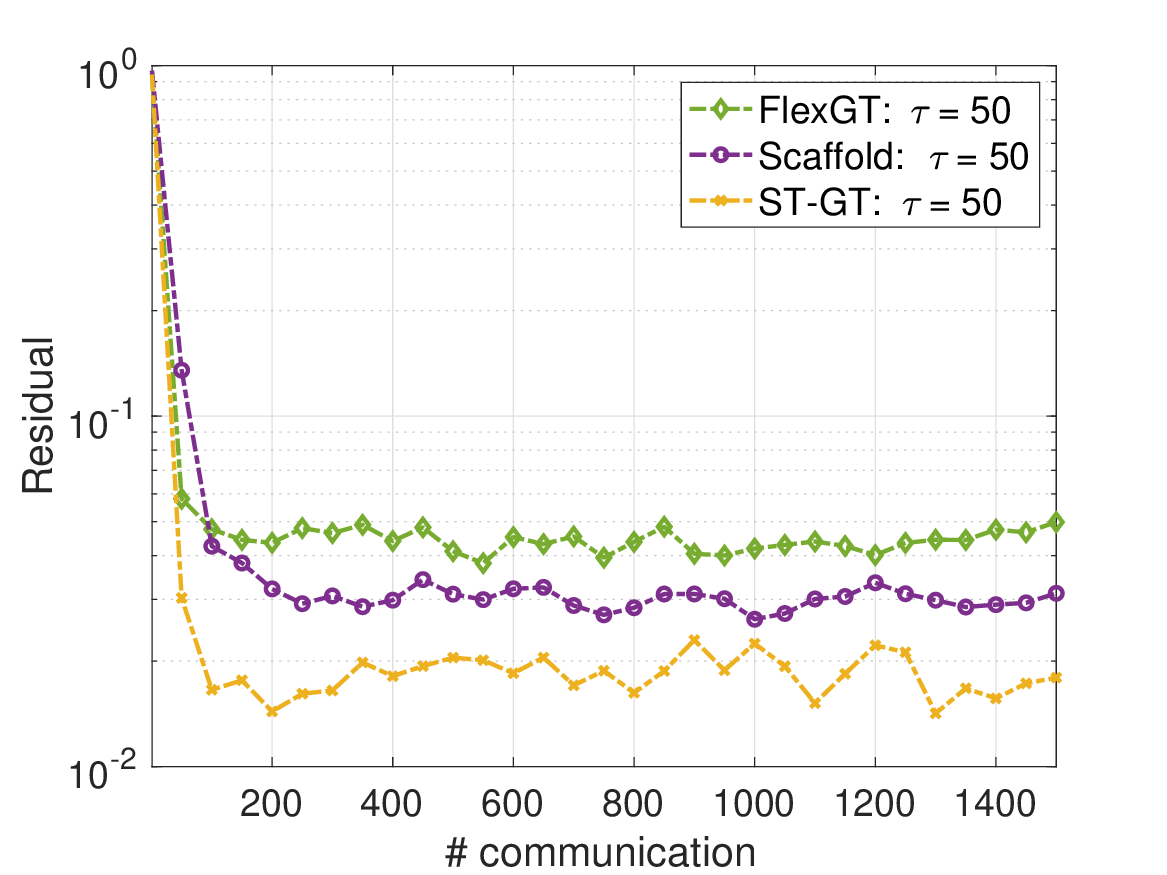}}
\end{minipage}
\caption{Comparison of the convergence between Scaffold, FlexGT, and {\myalg}. The number of nodes is $n=32$. For Scaffold with partial node participation, we set $s=4$ (left) and $s=16$ (right). For the exponential graph used in the other two algorithms, each node is connected to $3$ and $15$ neighbors, respectively. }
\label{fig:res}
\end{figure}

\begin{figure}[t]
\begin{minipage}[b]{0.49\linewidth}
  \centering
  \centerline{\includegraphics[width=1\linewidth]{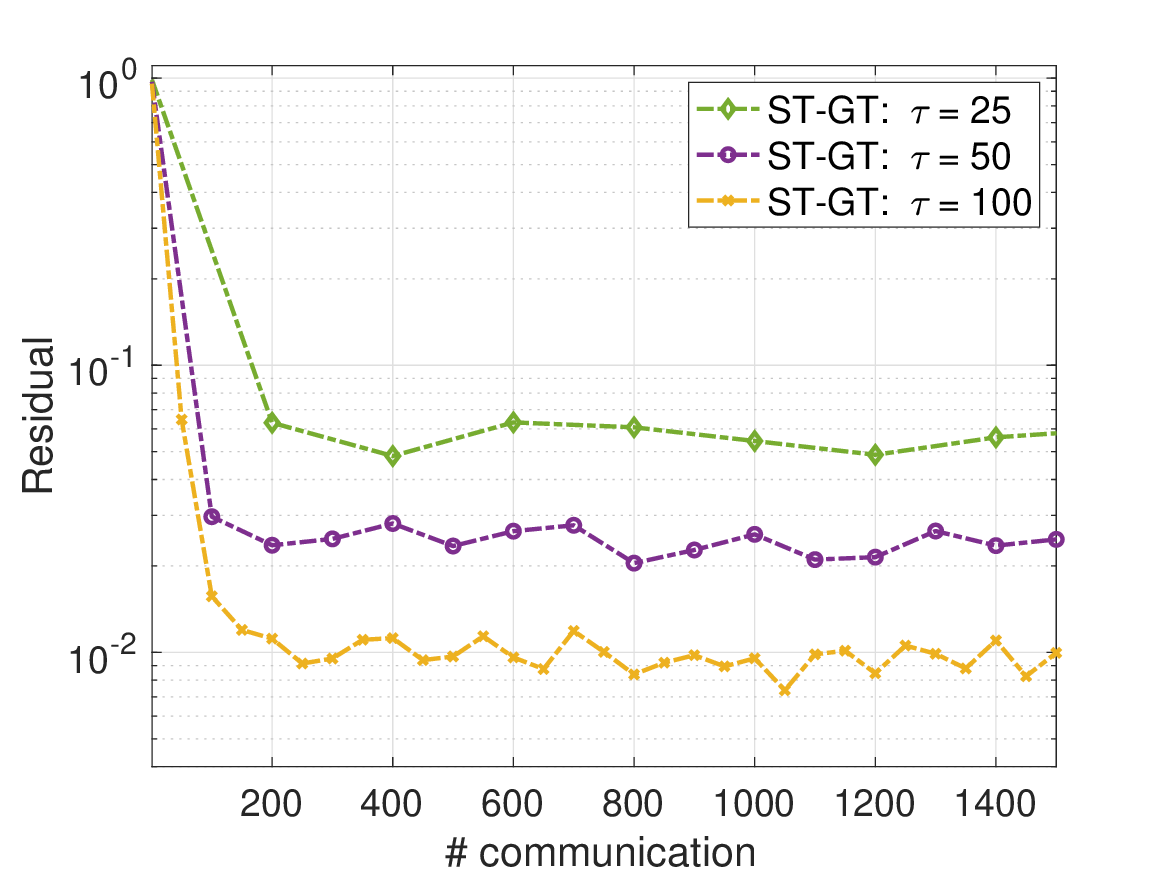}}
\end{minipage}
\begin{minipage}[b]{0.49\linewidth}
  \centering
  \centerline{\includegraphics[width=1\linewidth]{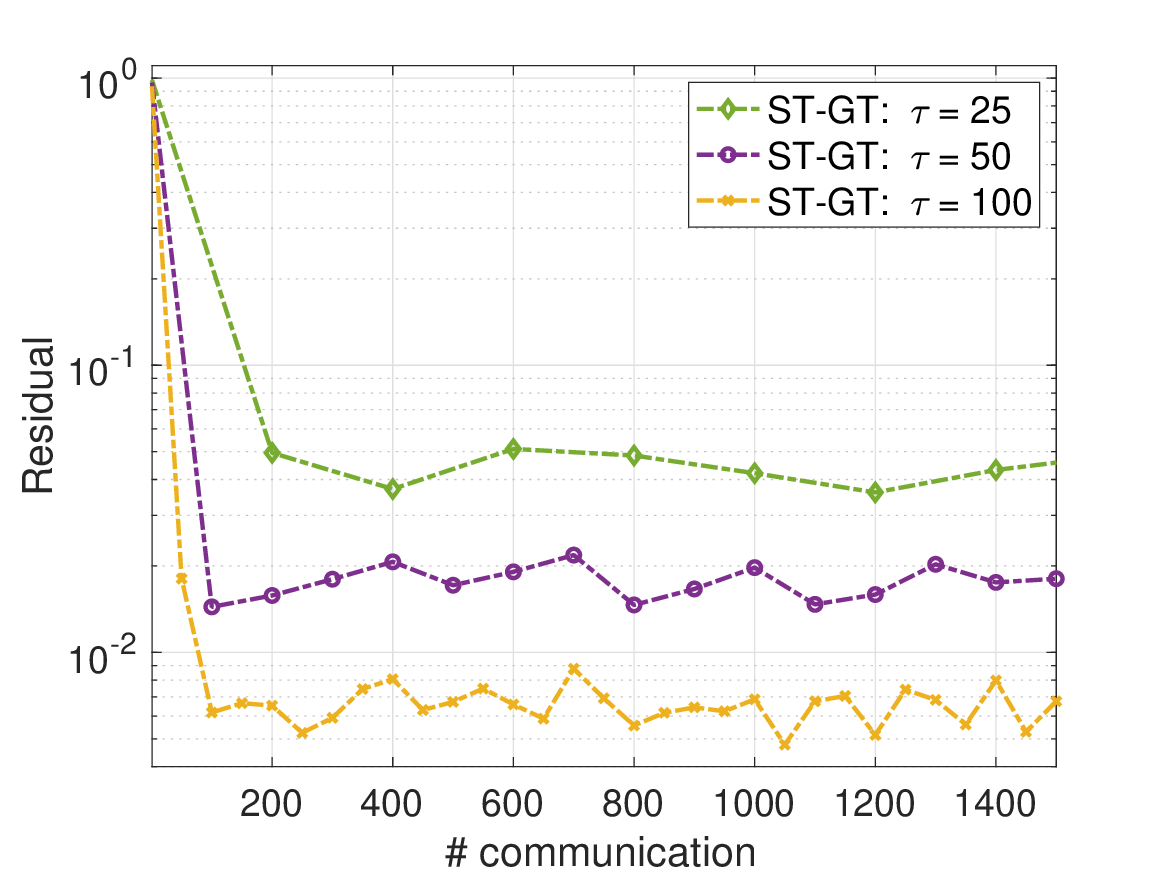}}
\end{minipage}
\caption{\hy{Impact of the number of local updates $\tau$. We use an exponential graph with $n=32$ nodes, where each node is connected to $3$ (left) or $15$ (right) neighbors.}}
\label{fig:synthetic_impact_of_tau}
\end{figure}

\begin{figure}[t]

\begin{minipage}[b]{0.49\linewidth}
  \centering
  \centerline{\includegraphics[width=1\linewidth]{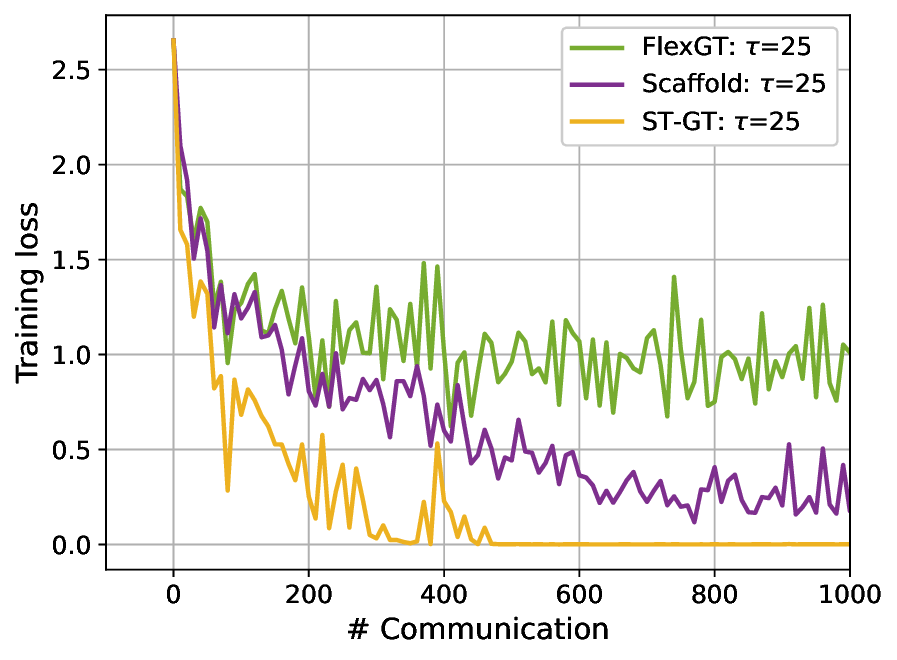}}
\end{minipage}
\begin{minipage}[b]{0.48\linewidth}
  \centering
  \centerline{\includegraphics[width=1\linewidth]{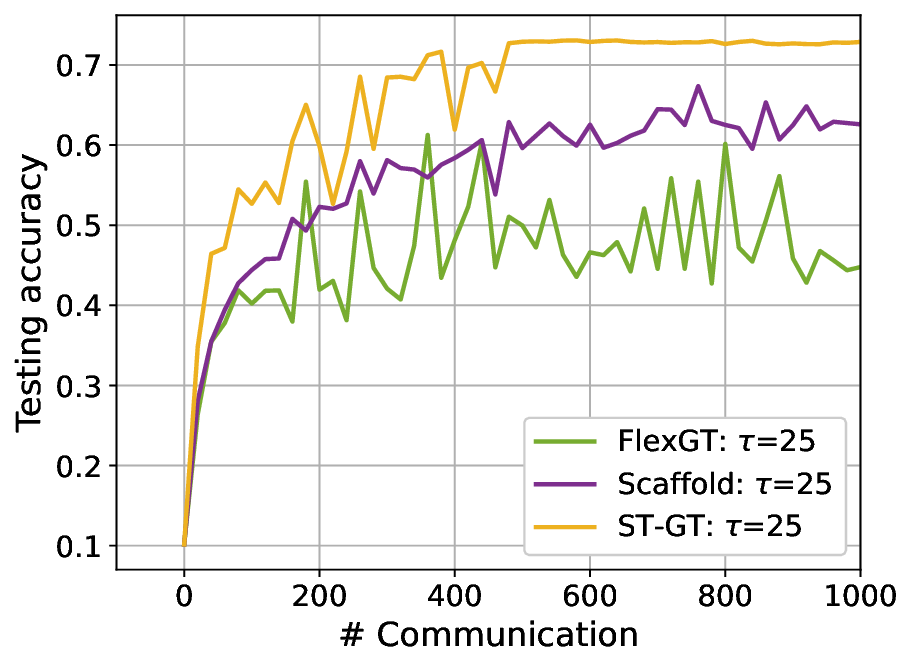}}
\end{minipage}
\caption{\hy{Comparison among {\myalg}, FlexGT, and Scaffold algorithms for distributed training of ResNet-18 on the CIFAR-10 dataset with $n=8$ nodes. The plots illustrate the training loss (left) and test accuracy (right) as functions of the communication rounds.}}
\label{fig:cifar}
\end{figure}

\section{Conclusion}
\label{sec:conclusion}

We have proposed a unified spatio-temporal gradient tracking algorithm, {\myalg}, for distributed stochastic optimization with non-i.i.d. datasets. By simultaneously tracking the global gradient across nodes and the time-averaged local stochastic gradients at each node, {\myalg} has improved robustness against data heterogeneity and mitigated the effect of the gradient noise. By investigating an extended version of Scaffold, we further revealed that it could also be interpreted as a spatio-temporal gradient tracking method. Without assuming any data similarity, we proved that {\myalg} achieved a linear and a sublinear convergence rate for strongly convex and nonconvex objective functions, respectively, while significantly reducing communication complexity compared with the FlexGT and K-GT algorithms, especially under poor network connectivity. Simulation results corroborated the theoretical analysis and demonstrated the effectiveness of {\myalg}.

\bibliography{ifacconf}       

@article{nedic2017achieving,
  title={Achieving geometric convergence for distributed optimization over time-varying graphs},
  author={Nedic, Angelia and Olshevsky, Alex and Shi, Wei},
  journal={SIAM Journal on Optimization},
  volume={27},
  number={4},
  pages={2597--2633},
  year={2017},
  publisher={SIAM}
}

@article{di2016next,
  title={Next: In-network nonconvex optimization},
  author={Di Lorenzo, Paolo and Scutari, Gesualdo},
  journal={IEEE Transactions on Signal and Information Processing over Networks},
  volume={2},
  number={2},
  pages={120--136},
  year={2016},
  publisher={IEEE}
}

@inproceedings{huang2023computation,
  title={On the computation-communication trade-off with a flexible gradient tracking approach},
  author={Huang, Yan and Xu, Jinming},
  booktitle={2023 62nd IEEE Conference on Decision and Control (CDC)},
  pages={284--289},
  year={2023},
  organization={IEEE}
}

@article{wu2025effectiveness,
  title={The effectiveness of local updates for decentralized learning under data heterogeneity},
  author={Wu, Tongle and Li, Zhize and Sun, Ying},
  journal={IEEE Transactions on Signal Processing},
  year={2025},
  publisher={IEEE}
}

@article{defazio2014saga,
  title={SAGA: A fast incremental gradient method with support for non-strongly convex composite objectives},
  author={Defazio, Aaron and Bach, Francis and Lacoste-Julien, Simon},
  journal={Advances in neural information processing systems},
  volume={27},
  year={2014}
}

@article{nguyen2025graphs,
  title={On graphs with finite-time consensus and their use in gradient tracking},
  author={Nguyen, Edward Duc Hien and Jiang, Xin and Ying, Bicheng and Uribe, C{\'e}sar A},
  journal={SIAM Journal on Optimization},
  volume={35},
  number={2},
  pages={872--898},
  year={2025},
  publisher={SIAM}
}

@techreport{krizhevsky2009learning,
  title={Learning multiple layers of features from tiny images},
  author={Krizhevsky, Alex},
  year={2009},
  institution={University of Toronto}
}

@inproceedings{zhang2024composite,
  title={Composite federated learning with heterogeneous data},
  author={Zhang, Jiaojiao and Hu, Jiang and Johansson, Mikael},
  booktitle={IEEE International Conference on Acoustics, Speech and Signal Processing (ICASSP)},
  pages={8946--8950},
  year={2024},
  organization={IEEE}
}

@article{jiang2024federated,
  title={Federated optimization with doubly regularized drift correction},
  author={Jiang, Xiaowen and Rodomanov, Anton and Stich, Sebastian U},
  journal={arXiv preprint arXiv:2404.08447},
  year={2024}
}

@inproceedings{mishchenko2022proxskip,
  title={Proxskip: Yes! local gradient steps provably lead to communication acceleration! finally!},
  author={Mishchenko, Konstantin and Malinovsky, Grigory and Stich, Sebastian and Richt{\'a}rik, Peter},
  booktitle={International Conference on Machine Learning},
  pages={15750--15769},
  year={2022},
  organization={PMLR}
}

@inproceedings{karimireddy2020scaffold,
  title={Scaffold: Stochastic controlled averaging for federated learning},
  author={Karimireddy, Sai Praneeth and Kale, Satyen and Mohri, Mehryar and Reddi, Sashank and Stich, Sebastian and Suresh, Ananda Theertha},
  booktitle={International conference on machine learning},
  pages={5132--5143},
  year={2020},
  organization={PMLR}
}

@inproceedings{nguyen2023performance,
  title={On the performance of gradient tracking with local updates},
  author={Nguyen, Edward Duc Hien and Alghunaim, Sulaiman A and Yuan, Kun and Uribe, C{\'e}sar A},
  booktitle={2023 62nd IEEE Conference on Decision and Control (CDC)},
  pages={4309--4313},
  year={2023},
  organization={IEEE}
}

@article{liu2024decentralized,
  title={Decentralized gradient tracking with local steps},
  author={Liu, Yue and Lin, Tao and Koloskova, Anastasia and Stich, Sebastian U},
  journal={Optimization Methods and Software},
  pages={1--28},
  year={2024},
  publisher={Taylor \& Francis}
}

@article{xin2020variance,
  title={Variance-reduced decentralized stochastic optimization with accelerated convergence},
  author={Xin, Ran and Khan, Usman A and Kar, Soummya},
  journal={IEEE Transactions on Signal Processing},
  volume={68},
  pages={6255--6271},
  year={2020},
  publisher={IEEE}
}

@article{mora2024enhancing,
  title={Enhancing generalization in federated learning with heterogeneous data: A comparative literature review},
  author={Mora, Alessio and Bujari, Armir and Bellavista, Paolo},
  journal={Future Generation Computer Systems},
  volume={157},
  pages={1--15},
  year={2024},
  publisher={Elsevier}
}

@article{dean2012large,
  title={Large scale distributed deep networks},
  author={Dean, Jeffrey and Corrado, Greg and Monga, Rajat and Chen, Kai and Devin, Matthieu and Mao, Mark and Ranzato, Marc'aurelio and Senior, Andrew and Tucker, Paul and Yang, Ke and others},
  journal={Advances in neural information processing systems},
  volume={25},
  year={2012}
}

@inproceedings{Li2014scaling,
author = {Li, Mu and Andersen, David G. and Park, Jun Woo and Smola, Alexander J. and Ahmed, Amr and Josifovski, Vanja and Long, James and Shekita, Eugene J. and Su, Bor-Yiing},
title = {Scaling distributed machine learning with the parameter server},
year = {2014},
isbn = {9781931971164},
publisher = {USENIX Association},
address = {USA},
booktitle = {Proceedings of the 11th USENIX Conference on Operating Systems Design and Implementation},
pages = {583–598},
numpages = {16},
location = {Broomfield, CO},
series = {OSDI'14}
}

@article{lian2017can,
  title={Can decentralized algorithms outperform centralized algorithms? a case study for decentralized parallel stochastic gradient descent},
  author={Lian, Xiangru and Zhang, Ce and Zhang, Huan and Hsieh, Cho-Jui and Zhang, Wei and Liu, Ji},
  journal={Advances in neural information processing systems},
  volume={30},
  year={2017}
}

@inproceedings{rabbat2004distributed,
  title={Distributed optimization in sensor networks},
  author={Rabbat, Michael and Nowak, Robert},
  booktitle={Proceedings of The 3rd International Symposium on Information Processing in Sensor Networks},
  pages={20--27},
  year={2004}
}

@inproceedings{tian2022distributed,
  title={Distributed riemannian optimization with lazy communication for collaborative geometric estimation},
  author={Tian, Yulun and Bedi, Amrit Singh and Koppel, Alec and Calvo-Fullana, Miguel and Rosen, David M and How, Jonathan P},
  booktitle={2022 IEEE/RSJ International Conference on Intelligent Robots and Systems (IROS)},
  pages={4391--4398},
  year={2022},
  organization={IEEE}
}

@article{yuan2016convergence,
  title={On the convergence of decentralized gradient descent},
  author={Yuan, Kun and Ling, Qing and Yin, Wotao},
  journal={SIAM Journal on Optimization},
  volume={26},
  number={3},
  pages={1835--1854},
  year={2016},
  publisher={SIAM}
}

@inproceedings{zhang2023fedaudio,
  title={Fedaudio: A federated learning benchmark for audio tasks},
  author={Zhang, Tuo and Feng, Tiantian and Alam, Samiul and Lee, Sunwoo and Zhang, Mi and Narayanan, Shrikanth S and Avestimehr, Salman},
  booktitle={IEEE International Conference on Acoustics, Speech and Signal Processing (ICASSP)},
  pages={1--5},
  year={2023},
  organization={IEEE}
}

@inproceedings{mcmahan2017communication,
  title={Communication-efficient learning of deep networks from decentralized data},
  author={McMahan, Brendan and Moore, Eider and Ramage, Daniel and Hampson, Seth and y Arcas, Blaise Aguera},
  booktitle={Artificial intelligence and statistics},
  pages={1273--1282},
  year={2017},
  organization={PMLR}
}

@inproceedings{
stich2018local,
title={Local {SGD} Converges Fast and Communicates Little},
author={Sebastian U. Stich},
booktitle={International Conference on Learning Representations},
year={2019},
}

@article{li2020federated,
  title={Federated optimization in heterogeneous networks},
  author={Li, Tian and Sahu, Anit Kumar and Zaheer, Manzil and Sanjabi, Maziar and Talwalkar, Ameet and Smith, Virginia},
  journal={Proceedings of Machine learning and systems},
  volume={2},
  pages={429--450},
  year={2020}
}

@article{lee2022preservation,
  title={Preservation of the global knowledge by not-true distillation in federated learning},
  author={Lee, Gihun and Jeong, Minchan and Shin, Yongjin and Bae, Sangmin and Yun, Se-Young},
  journal={Advances in Neural Information Processing Systems},
  volume={35},
  pages={38461--38474},
  year={2022}
}

@inproceedings{ram2009asynchronous,
  title={Asynchronous gossip algorithms for stochastic optimization},
  author={Ram, S Sundhar and Nedi{\'c}, A and Veeravalli, Venugopal V},
  booktitle={Proceedings of the 48h IEEE Conference on Decision and Control (CDC) held jointly with 2009 28th Chinese Control Conference},
  pages={3581--3586},
  year={2009},
  organization={IEEE}
}

@inproceedings{xu2015augmented,
  title={Augmented distributed gradient methods for multi-agent optimization under uncoordinated constant stepsizes},
  author={Xu, Jinming and Zhu, Shanying and Soh, Yeng Chai and Xie, Lihua},
  booktitle={2015 54th IEEE Conference on Decision and Control (CDC)},
  pages={2055--2060},
  year={2015},
  organization={IEEE}
}

@article{pu2021distributed,
  title={Distributed stochastic gradient tracking methods},
  author={Pu, Shi and Nedi{\'c}, Angelia},
  journal={Mathematical Programming},
  volume={187},
  number={1},
  pages={409--457},
  year={2021},
  publisher={Springer}
}

@inproceedings{lu2021optimal,
  title={Optimal complexity in decentralized training},
  author={Lu, Yucheng and De Sa, Christopher},
  booktitle={International conference on machine learning},
  pages={7111--7123},
  year={2021},
  organization={PMLR}
}

@article{ying2021exponential,
  title={Exponential graph is provably efficient for decentralized deep training},
  author={Ying, Bicheng and Yuan, Kun and Chen, Yiming and Hu, Hanbin and Pan, Pan and Yin, Wotao},
  journal={Advances in Neural Information Processing Systems},
  volume={34},
  pages={13975--13987},
  year={2021}
}

\appendix
\section{Proof of the main results}
\label{sec:appendix}

In this section, we provide the detailed convergence analysis for the {\myalg} algorithm.

\subsection{Supporting Lemmas}
\hy{We first bound the extent to which the local model parameters diverge from the averaged model between two communication rounds.}
\begin{lem}[Client divergence within a period]\label{Lem_client_diver}
Suppose Assumptions~\ref{Ass_graph}-\ref{Ass_sampling} hold. Let the stepsize satisfy $\,\,\gamma \leqslant \frac{1}{8\tau L}$. We have for all $t\in[0,\tau-1]$,
\begin{equation}
\begin{aligned}
&\frac{1}{n}\sum_{t=0}^{\tau -1}{\mathbb{E} \left[ \left\| X_{r\tau +t}-\mathbf{1}\bar{x}_{r\tau} \right\| ^2 \right]}
\\
&\leqslant 3\tau \frac{1}{n}\mathbb{E} \left[ \left\| X_{r\tau}-\mathbf{1}\bar{x}_{r\tau} \right\| ^2 \right] +\frac{9}{2}\gamma ^2\tau ^3\frac{1}{n}\mathbb{E} \left[ \left\| \varUpsilon _{r\tau} \right\| ^2 \right] 
\\
&\quad+\frac{9}{2}\gamma ^2\tau ^3\mathbb{E} \left[ \left\| \nabla f\left( \bar{x}_{r\tau} \right) \right\| ^2 \right] +\frac{3\gamma ^2\tau ^2}{2}\sigma ^2.
\end{aligned}
\end{equation}
\end{lem}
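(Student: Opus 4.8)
The plan is to unroll the $\tau-1$ local steps of round $r$ and reduce the client divergence to a handful of structured terms. Within the round (for $k\in[r\tau,\,r\tau+\tau-2]$) we have $X_{k+1}=X_k-\gamma Y_k$ and $Y_{k+1}=Y_k+G_{k+1}-G_k$; telescoping the second recursion gives $Y_{r\tau+s}=Y_{r\tau}-G_{r\tau}+G_{r\tau+s}$ for $s\in[0,\tau-1]$, so that $X_{r\tau+t}=X_{r\tau}-\gamma\sum_{s=0}^{t-1}Y_{r\tau+s}$. I would then substitute the definition of $\varUpsilon_{r\tau}$ in the form $Y_{r\tau}-G_{r\tau}=\varUpsilon_{r\tau}+\nabla f\left(\mathbf{1}\bar{x}_{r\tau}\right)-\nabla F\left(\mathbf{1}\bar{x}_{r\tau}\right)$, split $G_{r\tau+s}=\nabla F_{r\tau+s}+\left(G_{r\tau+s}-\nabla F_{r\tau+s}\right)$, and add-and-subtract $\nabla F\left(\mathbf{1}\bar{x}_{r\tau}\right)$ inside $\nabla F_{r\tau+s}$. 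The $\nabla F\left(\mathbf{1}\bar{x}_{r\tau}\right)$ contributions cancel, yielding the clean decomposition
\begin{equation*}
\begin{aligned}
X_{r\tau+t}-\mathbf{1}\bar{x}_{r\tau} &= \left(X_{r\tau}-\mathbf{1}\bar{x}_{r\tau}\right) - \gamma t\,\varUpsilon_{r\tau} - \gamma t\,\nabla f\left(\mathbf{1}\bar{x}_{r\tau}\right)\\
&\quad - \gamma\sum_{s=0}^{t-1}\left(\nabla F_{r\tau+s}-\nabla F\left(\mathbf{1}\bar{x}_{r\tau}\right)\right) - \gamma\sum_{s=0}^{t-1}\left(G_{r\tau+s}-\nabla F_{r\tau+s}\right).
\end{aligned}
\end{equation*}
This already exposes the four sources appearing on the right-hand side of the Lemma: the initial consensus error, the tracking error $\varUpsilon_{r\tau}$, the full-gradient magnitude (recall $\left\|\nabla f\left(\mathbf{1}\bar{x}_{r\tau}\right)\right\|^2=n\left\|\nabla f\left(\bar{x}_{r\tau}\right)\right\|^2$, which cancels the leading $1/n$), and the stochastic noise.

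Next I would take expectations. The last sum is a martingale-difference sequence, since each $G_{r\tau+s}-\nabla F_{r\tau+s}$ has zero conditional mean given the per-iterate filtration; hence its cross terms with the $\mathcal{F}_{r\tau}$-measurable quantities vanish, and $\mathbb{E}\big\|\gamma\sum_{s=0}^{t-1}\left(G_{r\tau+s}-\nabla F_{r\tau+s}\right)\big\|^2\le\gamma^2 t\,n\sigma^2$ by Assumption~\ref{Ass_sampling}. For the remaining groups I apply $\left\|\sum_{i=1}^{m}v_i\right\|^2\le m\sum_{i=1}^{m}\left\|v_i\right\|^2$. The only non-elementary piece is the gradient-difference term, which by $L$-smoothness (Assumption~\ref{Ass_smoothness}) obeys $\left\|\nabla F_{r\tau+s}-\nabla F\left(\mathbf{1}\bar{x}_{r\tau}\right)\right\|^2\le L^2\left\|X_{r\tau+s}-\mathbf{1}\bar{x}_{r\tau}\right\|^2$. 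Crucially, this bounds the term by the very quantity the Lemma controls, producing a self-referential inequality.

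To close it, I would sum the per-$t$ bound over $t=0,\dots,\tau-1$ and use $t\le\tau$: the initial-consensus, tracking, gradient, and noise contributions accumulate the factors $\tau$, $\sum_t t^2\le\tau^3$, $\sum_t t^2\le\tau^3$, and $\sum_t t\le\tau^2/2$, which after the splitting constants reproduce the stated coefficients $3\tau$, $\tfrac{9}{2}\gamma^2\tau^3$, $\tfrac{9}{2}\gamma^2\tau^3$, and $\tfrac{3}{2}\gamma^2\tau^2$. The self-referential smoothness term contributes at most $\tfrac{\gamma^2\tau^2 L^2}{2}\sum_{s=0}^{\tau-1}\tfrac{1}{n}\mathbb{E}\left\|X_{r\tau+s}-\mathbf{1}\bar{x}_{r\tau}\right\|^2$; the stepsize condition $\gamma\le\tfrac{1}{8\tau L}$ forces $\gamma^2\tau^2 L^2\le\tfrac{1}{64}$, so this term can be moved to the left-hand side and absorbed, leaving the claimed bound after rescaling constants.

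The hard part is exactly this self-referential gradient-difference term: because $\nabla F_{r\tau+s}$ is evaluated at the drifting iterate $X_{r\tau+s}$, bounding it naively reproduces the left-hand side, and the inequality only becomes usable after summing over the entire round and exploiting $\gamma\tau L\le 1/8$. A secondary point requiring care is the martingale bookkeeping: the gradient-difference term and the noise term are statistically dependent, so I would keep them in \emph{separate} Young groups rather than attempt to cancel their cross term, exploiting the zero-mean property only against the $\mathcal{F}_{r\tau}$-measurable terms.
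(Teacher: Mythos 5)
Your route is genuinely different from the paper's. The paper does not unroll the whole round at once: it sets up a one-step recursion $\mathbb{E}\left[\left\|X_{r\tau+t}-\mathbf{1}\bar{x}_{r\tau}\right\|^2\right]\leqslant(1+\beta)\mathbb{E}\left[\left\|X_{r\tau+t-1}-\mathbf{1}\bar{x}_{r\tau}\right\|^2\right]+(1+\beta^{-1})\gamma^2\mathbb{E}\left[\left\|Y_{r\tau}-G_{r\tau}+\nabla F_{r\tau+t-1}\right\|^2\right]+\gamma^2 n\sigma^2$ with $\beta=1/\tau$, splits the middle term into the same three pieces you identify ($\varUpsilon_{r\tau}$, $\nabla f(\bar{x}_{r\tau})$, and the smoothness-controlled drift), and then iterates, using $\left(1+\tfrac{1}{\tau}+3(1+\tau)\gamma^2L^2\right)^t<3$ to get the leading constant. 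The per-step formulation buys two things that your all-at-once decomposition has to work harder for. First, the constant $3$ in front of the consensus term comes from $(1+1/\tau)^\tau<e<3$; your uniform bound $\left\|\sum_{i=1}^{m}v_i\right\|^2\leqslant m\sum_i\left\|v_i\right\|^2$ with $m=4$ groups puts a factor of $4$ on \emph{every} group, so after summing over $t$ you get at least $4\tau$ on the initial consensus error, not the claimed $3\tau$. To recover $3\tau$ you need a weighted Young split that assigns weight of order $\tau$ to the consensus group and pushes the blow-up onto the $\gamma^2$-scaled groups (which have slack in their budgets); that reweighting is exactly what $\beta=1/\tau$ encodes. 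Second, in the one-step recursion only a single fresh noise increment appears per step and its cross terms with everything else vanish exactly, whereas in your unrolled sum the cross term between $G_{r\tau+s'}-\nabla F_{r\tau+s'}$ and $\nabla F_{r\tau+s}-\nabla F\left(\mathbf{1}\bar{x}_{r\tau}\right)$ survives for $s>s'$ (since $X_{r\tau+s}$ depends on $\xi_{r\tau+s'}$). You flag this correctly, but the extra Young split it forces must be charged either to the noise coefficient or to the drift coefficient, and that cost is not reflected in your final constants.

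Concretely, then: the decomposition and the absorption mechanism are sound, the self-referential drift term is handled correctly (the stepsize condition $\gamma\tau L\leqslant 1/8$ makes it absorbable regardless of the exact prefactor), and your argument proves a lemma of the same form. What does not go through as written is the claim that the stated coefficients $3\tau$, $\tfrac{9}{2}\gamma^2\tau^3$, $\tfrac{3}{2}\gamma^2\tau^2$ "are reproduced after the splitting constants": with the uniform split and the required noise--drift Young step they are not, and since these constants are reused verbatim in Lemmas~\ref{Lem_cons_err}--\ref{Lem_opt_gap}, you would either need to carry your (larger) constants through the downstream analysis or redo the splitting with $t$- or $\tau$-dependent weights. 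Also note the factor $\tfrac{\gamma^2\tau^2L^2}{2}$ you assign to the self-referential term has no visible source; the natural bound from your split is $4\gamma^2\tau^2L^2$ times the accumulated divergence, which is still $\leqslant\tfrac{1}{16}$ under the stepsize condition, so the absorption survives, but the bookkeeping should be made honest.
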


\begin{proof}
By the update rule of {\myalg} in \eqref{Eq_ST_GT}, we have
\begin{equation}
\begin{aligned}
&\mathbb{E} \left[ \left\| x_{i,r\tau+t}-\bar{x}_{r\tau} \right\| ^2 \right] 
\\
&=\mathbb{E} \left[ \left\| x_{i,r\tau+t-1}-\gamma \left( y_{i,r\tau}+g_{i,r\tau+t}-g_{i,r\tau} \right) -\bar{x}_{r\tau} \right\| ^2 \right].
\end{aligned}
\end{equation}
Then, by Assumptions~\ref{Ass_cov} and~\ref{Ass_sampling}, and using Young's inequality with parameter $\beta>0$, we get
\begin{equation}\label{Eq_divergence_1}
\begin{aligned}
&\mathbb{E} \left[ \left\| X_{r\tau +t}-\mathbf{1}\bar{x}_{r\tau} \right\| ^2 \right] 
\\
&\leqslant \left( 1+\beta \right) \mathbb{E} \left[ \left\| X_{r\tau +t-1}-\mathbf{1}\bar{x}_{r\tau} \right\| ^2 \right] +\gamma ^2n\sigma ^2
\\
&\quad+\left( 1+\beta ^{-1} \right) \gamma ^2\mathbb{E} \left[ \left\| Y_{r\tau}-G_{r\tau}+\nabla F_{rK+t-1} \right\| ^2 \right] 
\\
&\leqslant \left( 1+\beta +3\left( 1+\beta ^{-1} \right) \gamma ^2L^2 \right) \mathbb{E} \left[ \left\| X_{r\tau +t-1}-\mathbf{1}\bar{x}_{r\tau} \right\| ^2 \right] 
\\
&\quad+3\left( 1+\beta ^{-1} \right) \gamma ^2\mathbb{E} \left[ \left\| \varUpsilon _{r\tau} \right\| ^2 \right] 
\\
&\quad+3\left( 1+\beta ^{-1} \right) \gamma ^2\mathbb{E} \left[ \left\| \nabla f\left( \bar{x}_{r\tau} \right) \right\| ^2 \right] +\gamma ^2 n \sigma ^2.
\end{aligned}
\end{equation}
Letting $\beta =\frac{1}{\tau}$, $\,\,\gamma \leqslant \frac{1}{8\tau L}$ and noticing that
\[
\left( 1+\beta +3\left( 1+\beta ^{-1} \right) \gamma ^2L^2 \right) ^t<3,
\]
we obtain the result by iteratively applying \eqref{Eq_divergence_1}.
\end{proof}

\hy{Building on the above lemma, we can establish the contraction property of the consensus error.}

\begin{lem}[Consensus error]\label{Lem_cons_err}
Suppose Assumptions~\ref{Ass_graph}-\ref{Ass_sampling} hold. Let the stepsize satisfy $\gamma \leqslant \min \left\{ \frac{1-\rho}{11\tau L\sqrt{\rho}}\,\,,\frac{1}{8\tau L}\,\, \right\} $. We have
\begin{equation}\label{Eq_cons_err}
\begin{aligned}
&\mathbb{E} \left[ \left\| X_{\left( r+1 \right) \tau}-\mathbf{1}\bar{x}_{\left( r+1 \right) \tau} \right\| ^2 \right] 
\\
&\leqslant \frac{3+\rho}{4}\mathbb{E} \left[ \left\| X_{r\tau}-\mathbf{1}\bar{x}_{r\tau} \right\| ^2 \right] +\frac{5\rho}{1-\rho}\gamma ^2\tau ^2\mathbb{E} \left[ \left\| \varUpsilon _{r\tau} \right\| ^2 \right] 
\\
&\quad+3\rho \gamma ^2\tau n\sigma ^2+\frac{9\rho}{1-\rho}\gamma ^2\tau ^2n\mathbb{E} \left[ \left\| \nabla f\left( \bar{x}_{r\tau} \right) \right\| ^2 \right].
\end{aligned}
\end{equation}
\end{lem}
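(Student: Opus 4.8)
The plan is to collapse the entire communication step into a single application of the mixing matrix $W_r$ to the accumulated local trajectory, exploit the contraction of $W_r$ in expectation, and then feed the accumulated tracking variables into Lemma~\ref{Lem_client_diver}. First I would unroll the local updates: since $X_{k+1}=X_k-\gamma Y_k$ for $k\in[r\tau,r\tau+\tau-2]$, the pre-communication iterate is $\hat{X}_{(r+1)\tau}:=X_{r\tau}-\gamma\sum_{t=0}^{\tau-1}Y_{r\tau+t}$. From the definition of $Z_{r\tau}$ in \eqref{Eq_ST_GT} one has $X_{r\tau}-\tau\gamma Z_{r\tau}=X_{r\tau+\tau-1}-\gamma Y_{r\tau+\tau-1}=\hat{X}_{(r+1)\tau}$, so that $X_{(r+1)\tau}=W_r\hat{X}_{(r+1)\tau}$. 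Because $W_r$ is doubly stochastic, $\mathbf{1}^\top X_{(r+1)\tau}=\mathbf{1}^\top\hat{X}_{(r+1)\tau}$, hence $\mathbf{1}\bar{x}_{(r+1)\tau}=\mathbf{J}\hat{X}_{(r+1)\tau}$, and using $(W_r-\mathbf{J})\mathbf{J}=0$ we obtain $X_{(r+1)\tau}-\mathbf{1}\bar{x}_{(r+1)\tau}=(W_r-\mathbf{J})(\mathbf{I}-\mathbf{J})\hat{X}_{(r+1)\tau}$. Taking the conditional expectation over $W_r$ (independent of the iterates) and invoking Assumption~\ref{Ass_graph} yields $\mathbb{E}\|X_{(r+1)\tau}-\mathbf{1}\bar{x}_{(r+1)\tau}\|^2\le\rho\,\mathbb{E}\|(\mathbf{I}-\mathbf{J})\hat{X}_{(r+1)\tau}\|^2$.

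Next I would isolate the previous consensus error from the accumulated tracking term. Applying Young's inequality with a parameter tuned so the $X_{r\tau}$ coefficient stays strictly below $\tfrac{3+\rho}{4}$ gives
\[\rho\,\mathbb{E}\|(\mathbf{I}-\mathbf{J})\hat{X}_{(r+1)\tau}\|^2\le\tfrac{1+\rho}{2}\,\mathbb{E}\|X_{r\tau}-\mathbf{1}\bar{x}_{r\tau}\|^2+\tfrac{\rho(1+\rho)}{1-\rho}\gamma^2\,\mathbb{E}\Big\|(\mathbf{I}-\mathbf{J})\textstyle\sum_{t=0}^{\tau-1}Y_{r\tau+t}\Big\|^2,\]
leaving a gap of $\tfrac{1-\rho}{4}$ to absorb the forthcoming smoothness term. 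Using $Y_{r\tau+t}=Y_{r\tau}+G_{r\tau+t}-G_{r\tau}$ together with $Y_{r\tau}=\varUpsilon_{r\tau}+G_{r\tau}+\nabla f(\mathbf{1}\bar{x}_{r\tau})-\nabla F(\mathbf{1}\bar{x}_{r\tau})$, and the facts that $(\mathbf{I}-\mathbf{J})$ annihilates $\nabla f(\mathbf{1}\bar{x}_{r\tau})$ and fixes $\varUpsilon_{r\tau}$, the accumulated term splits (via $\|a+b+c\|^2\le 3\|a\|^2+3\|b\|^2+3\|c\|^2$) into $\tau\varUpsilon_{r\tau}$, the smoothness deviation $\sum_t(\nabla F(X_{r\tau+t})-\nabla F(\mathbf{1}\bar{x}_{r\tau}))$, and the fresh noise $\sum_t(G_{r\tau+t}-\nabla F(X_{r\tau+t}))$. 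The first contributes $3\tau^2\mathbb{E}\|\varUpsilon_{r\tau}\|^2$; the second, via $L$-smoothness and Cauchy--Schwarz, is bounded by $3\tau L^2\sum_t\mathbb{E}\|X_{r\tau+t}-\mathbf{1}\bar{x}_{r\tau}\|^2$, to which Lemma~\ref{Lem_client_diver} applies verbatim; the third, by unbiasedness and independence across $t$, contributes at most $3\tau n\sigma^2$.

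Finally I would substitute Lemma~\ref{Lem_client_diver} and collect terms. Its leading output $3\tau\,\tfrac1n\mathbb{E}\|X_{r\tau}-\mathbf{1}\bar{x}_{r\tau}\|^2$ feeds a term proportional to $\tfrac{\rho(1+\rho)}{1-\rho}\gamma^2\cdot 3\tau L^2\cdot 3\tau\,\mathbb{E}\|X_{r\tau}-\mathbf{1}\bar{x}_{r\tau}\|^2$ back into the recursion, and the stepsize condition $\gamma\le\tfrac{1-\rho}{11\tau L\sqrt{\rho}}$ is exactly what forces this below $\tfrac{1-\rho}{4}\mathbb{E}\|X_{r\tau}-\mathbf{1}\bar{x}_{r\tau}\|^2$, so the $X_{r\tau}$ coefficient reaches precisely $\tfrac{3+\rho}{4}$, while $\gamma\le\tfrac{1}{8\tau L}$ validates Lemma~\ref{Lem_client_diver}. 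The residual higher-order $\gamma^4\tau^4$ contributions attached to $\varUpsilon_{r\tau}$ and $\nabla f(\bar{x}_{r\tau})$ are then folded into the stated coefficients $\tfrac{5\rho}{1-\rho}\gamma^2\tau^2$ and $\tfrac{9\rho}{1-\rho}\gamma^2\tau^2 n$, giving \eqref{Eq_cons_err}.

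The hard part will be the gradient-noise bookkeeping. Because $Y_{r\tau}$ already carries the fresh stochastic gradient $G_{r\tau}$ and because $\nabla F(X_{r\tau+t})$ depends on the intra-round samples, the fresh noise $\sum_t(G_{r\tau+t}-\nabla F(X_{r\tau+t}))$ is not cleanly independent of the $\varUpsilon_{r\tau}$ and smoothness pieces, and a naive Young split would amplify it by the factor $(1+\eta^{-1})\sim 1/(1-\rho)$. To recover the clean $3\rho\gamma^2\tau n\sigma^2$ term without the spurious $1/(1-\rho)$, the zero-mean fresh-sample noise must be separated from its conditionally deterministic counterpart through the tower property, so that it passes through the $W_r$ contraction with factor $\rho$ only; keeping every constant aligned across the Lemma~\ref{Lem_client_diver} substitution so that the contraction coefficient lands exactly at $\tfrac{3+\rho}{4}$ is the other delicate point.
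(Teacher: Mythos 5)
Your proposal follows essentially the same route as the paper's proof: collapse the round into $X_{(r+1)\tau}=W_r\bigl(X_{r\tau}-\gamma\sum_{t}Y_{r\tau+t}\bigr)$, contract via Assumption~\ref{Ass_graph}, split off the previous consensus error by Young's inequality with the $\tfrac{1+\rho}{2}$/$\tfrac{\rho(1+\rho)}{1-\rho}$ weights, decompose the accumulated tracking term into $\tau\varUpsilon_{r\tau}$, a smoothness deviation handled by Lemma~\ref{Lem_client_diver}, and zero-mean noise, and finally use the stepsize condition to land the contraction factor at $\tfrac{3+\rho}{4}$. You also correctly identify the one delicate point — keeping the fresh-noise term at $3\rho\gamma^2\tau n\sigma^2$ without a spurious $1/(1-\rho)$ by separating the zero-mean noise before the Young split — which is exactly how the paper's intermediate bound obtains its standalone $\rho\gamma^2\tau n\sigma^2$ term, so the proof is correct and matches the paper.
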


\begin{proof}
By the update rule of {\myalg} \eqref{Eq_ST_GT} and under Assumptions~\ref{Ass_graph} and~\ref{Ass_cov}, we have
\begin{equation}
\begin{aligned}
&\mathbb{E} \left[ \left\| X_{\left( r+1 \right) \tau}-\mathbf{1}\bar{x}_{\left( r+1 \right) \tau} \right\| ^2 \right] 
\\
&\leqslant \frac{1+\rho}{2}\mathbb{E} \left[ \left\| X_{r\tau}-\mathbf{1}\bar{x}_{r\tau} \right\| ^2 \right] 
\\
&\quad+\frac{2\left( 1+\rho \right) \rho}{1-\rho}\gamma ^2\tau ^2\mathbb{E} \left[ \left\| \varUpsilon _{r\tau} \right\| ^2 \right] +\rho \gamma ^2\tau n\sigma ^2
\\
&\quad+\frac{4\left( 1+\rho \right) \rho}{1-\rho}\gamma ^2\tau L^2\sum_{t=0}^{\tau -1}{\mathbb{E} \left[ \left\| X_{r\tau +t}-\mathbf{1}\bar{x}_{r\tau} \right\| ^2 \right]}
\\
&\quad+\frac{4\left( 1+\rho \right) \rho}{1-\rho}\gamma ^2\tau ^2n\mathbb{E} \left[ \left\| \nabla f\left( \bar{x}_{r\tau} \right) \right\| ^2 \right] 
\\
&\quad+2\mathbb{E} \left[ \left< \sum_{t=0}^{\tau -1}{\left( G\left( X_{r\tau +t} \right) -\nabla F_{r\tau +t} \right) \,\,},\sum_{t=0}^{\tau -1}{\nabla F_{r\tau +t}\,\,} \right> \right] .
\end{aligned}
\end{equation}
With the help of Lemma~\ref{Lem_client_diver}, we get
\begin{equation}
\begin{aligned}
&\mathbb{E} \left[ \left\| X_{\left( r+1 \right) \tau}-\mathbf{1}\bar{x}_{\left( r+1 \right) \tau} \right\| ^2 \right] 
\\
&\leqslant \left( \frac{1+\rho}{2}+\frac{30\rho}{1-\rho}\gamma ^2\tau ^2L^2 \right) \mathbb{E} \left[ \left\| X_{r\tau}-\mathbf{1}\bar{x}_{r\tau} \right\| ^2 \right] 
\\
&\quad+\left( \frac{4\rho}{1-\rho}\gamma ^2\tau ^2+\frac{45\rho}{1-\rho}\gamma ^4\tau ^4L^2 \right) \mathbb{E} \left[ \left\| \varUpsilon _{r\tau} \right\| ^2 \right] 
\\
&\quad+\left( \frac{8\rho}{1-\rho}\gamma ^2\tau ^2+\frac{45\rho}{1-\rho}\gamma ^4\tau ^4L^2 \right) n\mathbb{E} \left[ \left\| \nabla f\left( \bar{x}_{r\tau} \right) \right\| ^2 \right] 
\\
&\quad+2\rho \gamma ^2\tau n\sigma ^2+\frac{15\rho}{1-\rho}\gamma ^4\tau ^3L^2n\sigma ^2.
\end{aligned}
\end{equation}

Letting the stepsize satisfy $\gamma \leqslant \min \left\{ \frac{1-\rho}{11L\sqrt{\rho}}\,\,,\frac{1}{8\tau L}\,\, \right\} $, we obtain the result.
\end{proof} 

\hy{Similarly, we prove the contraction property of the gradient tracking error.}

\begin{lem}[Gradient-tracking error]\label{Lem_tracking_err}
Suppose Assumptions~\ref{Ass_graph}-\ref{Ass_sampling} hold. Let the stepsize $\gamma \leqslant \frac{1-\rho}{12\tau L}$. Then, we have
\begin{equation}\label{Eq_tracking_err}
\begin{aligned}
&\mathbb{E} \left[ \left\| \varUpsilon _{\left( r+1 \right) \tau} \right\| ^2 \right] 
\\
&\leqslant \frac{3+\rho}{4}\mathbb{E} \left[ \left\| \varUpsilon _{r\tau} \right\| ^2 \right] +\frac{27\left( 1+\rho \right) L^2}{1-\rho}\mathbb{E} \left[ \left\| X_{r\tau}-\mathbf{1}\bar{x}_{r\tau} \right\| ^2 \right] 
\\
&\quad+\frac{8\left( 1+\rho \right)}{1-\rho}n\mathbb{E} \left[ \left\| \nabla f\left( \bar{x}_{r\tau} \right) \right\| ^2 \right] +7n\frac{\sigma ^2}{\tau}.
\end{aligned}
\end{equation}
\end{lem}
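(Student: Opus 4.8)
The plan is to turn the communication step of \eqref{Eq_ST_GT} into a contractive one-round recursion for $\varUpsilon$, after first showing that the part of the gradient-tracking variable carrying the $O(1)$ data heterogeneity cancels exactly against the term built into the definition of $\varUpsilon$. The starting observation is that the local update $y_{i,k+1}-g_{i,k+1}=y_{i,k}-g_{i,k}$ keeps $\varPhi_k:=Y_k-G_k$ constant for $k\in[r\tau,r\tau+\tau-1]$; hence $Z_{r\tau}=\tfrac1\tau\sum_{t=0}^{\tau-1}Y_{r\tau+t}=\varPhi_{r\tau}+\bar{G}_r$ with $\bar{G}_r:=\tfrac1\tau\sum_{t=0}^{\tau-1}G_{r\tau+t}$, and substituting this into the update of $Y$ makes the single-step gradient $G_{(r+1)\tau}$ cancel, leaving $\varPhi_{(r+1)\tau}=W_r\varPhi_{r\tau}+(W_r-\mathbf{I})\bar{G}_r$. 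Because of the initialization $y_{i,0}=g_{i,0}$ we have $\mathbf{1}^{\top}\varPhi_k=0$ for all $k$, so $\mathbf{J}\varUpsilon_k=0$ and $W_r\varPhi_{r\tau}=(W_r-\mathbf{J})\varPhi_{r\tau}$.

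Next I would use $-\nabla f(\mathbf{1}\bar{x}_k)+\nabla F(\mathbf{1}\bar{x}_k)=(\mathbf{I}-\mathbf{J})\nabla F(\mathbf{1}\bar{x}_k)$, so that $\varUpsilon_k=\varPhi_k+(\mathbf{I}-\mathbf{J})\nabla F(\mathbf{1}\bar{x}_k)$, and split $\bar{G}_r=\nabla F(\mathbf{1}\bar{x}_{(r+1)\tau})+\tfrac1\tau\sum_t(\nabla F_{r\tau+t}-\nabla F(\mathbf{1}\bar{x}_{(r+1)\tau}))+\bar{N}_r$, where $\bar{N}_r:=\tfrac1\tau\sum_t(G_{r\tau+t}-\nabla F_{r\tau+t})$ is the averaged sampling noise. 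Using $W_r\mathbf{J}=\mathbf{J}$, the identity $(W_r-\mathbf{I})\nabla F(\mathbf{1}\bar{x}_{(r+1)\tau})+(\mathbf{I}-\mathbf{J})\nabla F(\mathbf{1}\bar{x}_{(r+1)\tau})=(W_r-\mathbf{J})\nabla F(\mathbf{1}\bar{x}_{(r+1)\tau})$, and the fact that $\bar{x}_{(r+1)\tau}=\bar{x}_{r\tau}-\tau\gamma\bar{z}_{r\tau}$ is independent of $W_r$, I would collect everything into
\begin{equation*}
\begin{aligned}
\varUpsilon_{(r+1)\tau}&=(W_r-\mathbf{J})\varUpsilon_{r\tau}+(W_r-\mathbf{I})\bar{N}_r \\
&\quad+(W_r-\mathbf{J})\big(\nabla F(\mathbf{1}\bar{x}_{(r+1)\tau})-\nabla F(\mathbf{1}\bar{x}_{r\tau})\big) \\
&\quad+(W_r-\mathbf{I})\tfrac{1}{\tau}\sum_{t=0}^{\tau-1}\big(\nabla F_{r\tau+t}-\nabla F(\mathbf{1}\bar{x}_{(r+1)\tau})\big).
\end{aligned}
\end{equation*}
Crucially, the heterogeneity has disappeared: all residual terms contain only gradient \emph{differences} or the averaged noise.

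I would then take conditional expectations and bound the four groups separately. For the leading term, independence of $W_r$ from the round-$r$ iterates and Assumption~\ref{Ass_graph} give $\mathbb{E}\|(W_r-\mathbf{J})\varUpsilon_{r\tau}\|^2\le\rho\,\mathbb{E}\|\varUpsilon_{r\tau}\|^2$, and a Young split with parameter chosen so that $(1+\alpha)\rho=\tfrac{3+\rho}{4}$ produces the claimed contraction factor while leaving slack $\tfrac{3(1-\rho)}4$. For the two gradient-difference terms I would apply $L$-smoothness, the elementary bounds $\mathbb{E}\|(W_r-\mathbf{I})M\|^2\le2(1+\rho)\|M\|^2$ and $\mathbb{E}\|(W_r-\mathbf{J})M\|^2\le\rho\|M\|^2$ (which, with the Young factor of order $1/(1-\rho)$, generate the $(1+\rho)/(1-\rho)$ coefficients), and then Lemma~\ref{Lem_client_diver} to replace $\sum_t\mathbb{E}\|X_{r\tau+t}-\mathbf{1}\bar{x}_{r\tau}\|^2$ by a combination of $\mathbb{E}\|X_{r\tau}-\mathbf{1}\bar{x}_{r\tau}\|^2$, $\mathbb{E}\|\varUpsilon_{r\tau}\|^2$, $n\,\mathbb{E}\|\nabla f(\bar{x}_{r\tau})\|^2$ and an $O(\gamma^2\tau^2\sigma^2)$ remainder. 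The stepsize bound $\gamma\le\tfrac{1-\rho}{12\tau L}$ is precisely what makes the induced $\mathbb{E}\|\varUpsilon_{r\tau}\|^2$-feedback small enough to be absorbed into the $\tfrac{3(1-\rho)}4$ slack, while rescaling the remaining coefficients to $\tfrac{27(1+\rho)L^2}{1-\rho}$ and $\tfrac{8(1+\rho)}{1-\rho}$. Finally, since the $\tau$ per-step noises are conditionally zero-mean and mutually uncorrelated, $\mathbb{E}\|\bar{N}_r\|^2=\tfrac1{\tau^2}\sum_t\mathbb{E}\|G_{r\tau+t}-\nabla F_{r\tau+t}\|^2\le n\sigma^2/\tau$, which—together with the drift-induced noise from Lemma~\ref{Lem_client_diver}—yields the $7n\sigma^2/\tau$ floor.

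The main obstacle is the second step: obtaining the recursion in which the $O(1)$ client heterogeneity cancels. A naive bound on $(W_r-\mathbf{I})\bar{G}_r$ leaves the uncontrollable term $\|(\mathbf{I}-W_r)\nabla F(\mathbf{1}\bar{x}_{r\tau})\|^2$, so it is essential to expand $\bar{G}_r$ around $\nabla F(\mathbf{1}\bar{x}_{(r+1)\tau})$ and to exploit $W_r\mathbf{J}=\mathbf{J}$ and $\mathbf{J}\varUpsilon_{r\tau}=0$. The second delicate point is keeping the noise contribution at the variance-reduced level $\sigma^2/\tau$ with an $O(1)$ (rather than $O(1/(1-\rho))$) coefficient; this requires peeling off the explicitly-linear noise term $(W_r-\mathbf{I})\bar{N}_r$ through its conditional unbiasedness before applying the $1/(1-\rho)$-weighted Young split to the deterministic gradient residuals. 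The rest is constant bookkeeping through Young's inequality and Lemma~\ref{Lem_client_diver}.
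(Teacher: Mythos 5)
Your proposal is correct and follows essentially the same route as the paper's proof: the same one-round recursion for $\varUpsilon$ obtained from \eqref{Eq_ST_GT} (your regrouped decomposition is algebraically identical to the paper's, using $W_r\mathbf{J}=\mathbf{J}$ and $\mathbf{1}^{\top}\varUpsilon_k=0$ to cancel the $O(1)$ heterogeneity), followed by the $\rho$-contraction of $(W_r-\mathbf{J})\varUpsilon_{r\tau}$ with a Young split of order $1/(1-\rho)$, smoothness plus Lemma~\ref{Lem_client_diver} to absorb the drift terms under the stated stepsize, and the variance reduction $\mathbb{E}\|\bar N_r\|^2\le n\sigma^2/\tau$ for the noise floor. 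The only difference is presentational: the paper keeps the two $(\mathbf{I}-\mathbf{J})\nabla F(\mathbf{1}\bar x)$ terms separate and pairs them through $\|\bar{x}_{(r+1)\tau}-\bar{x}_{r\tau}\|^2$, whereas you fold them into explicit gradient differences up front.
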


\begin{proof}
\hy{By the update rule of {\myalg} in \eqref{Eq_ST_GT}}, we have
\begin{equation}
\begin{aligned}
\varUpsilon _{\left( r+1 \right) \tau}&=\left( W-\mathbf{J} \right) \varUpsilon _{r\tau}-W\left( \nabla F\left( \mathbf{1}\bar{x}_{r\tau} \right) -\nabla f\left( \mathbf{1}\bar{x}_{r\tau} \right) \right) 
\\
&\quad+\left( W-\mathbf{I} \right) \frac{1}{\tau}\sum_{t=0}^{\tau -1}{G_{r\tau +t}}
\\
&\quad+\left( \nabla F\left( \mathbf{1}\bar{x}_{\left( r+1 \right) \tau} \right) -\nabla f\left( \mathbf{1}\bar{x}_{\left( r+1 \right) \tau} \right) \right).
\end{aligned}
\end{equation}
Then, by Assumptions~\ref{Ass_graph}-\ref{Ass_sampling} and using Young's inequality, we get
\begin{equation}
\begin{aligned}
&\mathbb{E} \left[ \left\| \varUpsilon _{\left( r+1 \right) \tau} \right\| ^2 \right] 
\\
&\leqslant \frac{1+\rho}{2}\mathbb{E} \left[ \left\| \varUpsilon _{r\tau} \right\| ^2 \right] +6n\frac{\sigma ^2}{\tau}
\\
&\quad+\left( 3n\frac{\left( 1+\rho \right) L ^2}{1-\rho}+1 \right) \mathbb{E} \left[ \left\| \bar{x}_{\left( r+1 \right) \tau}-\bar{x}_{rK} \right\| ^2 \right] 
\\
&\quad+2\left( \frac{3\left( 1+\rho \right) L^2}{1-\rho}+1 \right) \frac{1}{\tau}\sum_{t=0}^{\tau -1}{\mathbb{E} \left[ \left\| X_{r\tau +t}-\mathbf{1}\bar{x}_{r\tau} \right\| ^2 \right]}
\\
&\quad+\frac{6\left( 1+\rho \right)}{1-\rho}n\mathbb{E} \left[ \left\| \nabla f\left( \bar{x}_{r\tau} \right) \right\| ^2 \right] .
\end{aligned}
\end{equation}
With the help of Lemma~\ref{Lem_client_diver}, and noticing that
\begin{equation}
\begin{aligned}
&\mathbb{E} \left[ \left\| \bar{x}_{\left( r+1 \right) \tau}-\bar{x}_{r\tau} \right\| ^2 \right] 
\\
&=\gamma ^2\mathbb{E} \left[ \left\| \sum_{t=0}^{\tau -1}{\frac{\mathbf{1}^{\top}}{n}\nabla G_{r\tau +t}} \right\| ^2 \right] 
\\
&\leqslant \gamma ^2\tau ^2\mathbb{E} \left[ \left\| \frac{1}{\tau}\sum_{t=0}^{\tau -1}{\frac{\mathbf{1}^{\top}}{n}\nabla F_{r\tau +t}} \right\| ^2 \right] +\gamma ^2\frac{\tau \sigma ^2}{n}
\\
&\leqslant 2\gamma ^2\tau ^2\frac{L^2}{n\tau}\sum_{t=0}^{\tau -1}{\mathbb{E} \left[ \left\| X_{r\tau +t}-\mathbf{1}\bar{x}_{r\tau} \right\| ^2 \right]}
\\
&\quad+2\gamma ^2\tau ^2\mathbb{E} \left[ \nabla f\left( \bar{x}_{r\tau} \right) ^2 \right] +\gamma ^2\frac{\tau \sigma ^2}{n},
\end{aligned}
\end{equation}
we obtain the result with stepsize $\gamma \leqslant \frac{1-\rho}{18\tau L}$.
\end{proof}

\subsection{Proof of Theorem~\ref{Thm_sc}}\label{Subsec_proof_Thm_sc}

With these supporting lemmas in hand, we can prove Theorem~\ref{Thm_sc} under Assumptions~\ref{Ass_graph}-\ref{Ass_sampling}, and the stepsize condition.

To this end, we first prove the contraction of the optimality gap in the following lemma for the strongly convex and smooth objective function.

\begin{lem}[Optimality gap]\label{Lem_opt_gap}
Suppose Assumptions~\ref{Ass_graph}-\ref{Ass_sampling} hold. Let the stepsize satisfy $\gamma \leqslant \frac{1}{11\tau L}$. We have
\begin{equation}
\begin{aligned}
&\mathbb{E} \left[ \left\| \bar{x}_{\left( r+1 \right) \tau}-x^* \right\| ^2 \right] 
\\
&\leqslant \left( 1-\frac{\gamma \mu \tau}{2} \right) \mathbb{E} \left[ \left\| \bar{x}_{r\tau}-x^* \right\| ^2 \right] 
\\
&\quad+9\gamma \tau L\frac{1}{n}\mathbb{E} \left[ \left\| X_{r\tau}-\mathbf{1}\bar{x}_{r\tau} \right\| ^2 \right] +14\gamma ^3\tau ^3L\frac{1}{n}\mathbb{E} \left[ \left\| \varUpsilon _{r\tau} \right\| ^2 \right] 
\\
&\quad+\gamma ^2\frac{\tau \sigma ^2}{n}+5\gamma ^3\tau ^2L\sigma ^2-\gamma \tau \mathbb{E} \left[ f\left( \bar{x}_{rK} \right) -\nabla f\left( x^* \right) \right] .
\end{aligned}
\end{equation}
\end{lem}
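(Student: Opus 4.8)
The plan is to collapse the $\tau$ local steps into a single effective SGD step on the network average $\bar x_{r\tau}$ and then run the standard strongly convex descent argument. Since $W_r$ is doubly stochastic, multiplying the $X$-update in \eqref{Eq_ST_GT} by $\mathbf{1}^\top/n$ removes $W_r$ (because $\mathbf{1}^\top W_r=\mathbf{1}^\top$) and gives $\bar x_{(r+1)\tau}=\bar x_{r\tau}-\gamma\tau\bar z_{r\tau}$. Telescoping the averaged local recursions $\bar x_{k+1}=\bar x_k-\gamma\bar y_k$ and $\bar y_{k+1}=\bar y_k+\bar g_{k+1}-\bar g_k$, together with the gradient-tracking invariant $\bar y_{r\tau}=\bar g_{r\tau}$ (which holds by the initialization $y_{i,0}=g_{i,0}$ and is preserved across rounds by the $Y$-update), yields $\tau\bar z_{r\tau}=\frac{1}{n}\sum_{i}\sum_{t=0}^{\tau-1}g_{i,r\tau+t}$, so that
\begin{equation*}
\bar x_{(r+1)\tau}=\bar x_{r\tau}-\frac{\gamma}{n}\sum_{i=1}^{n}\sum_{t=0}^{\tau-1}g_{i,r\tau+t}.
\end{equation*}

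Next I would expand $\mathbb{E}[\|\bar x_{(r+1)\tau}-x^*\|^2]$ and split the summed gradient into its mean $\bar h_r:=\frac{1}{n}\sum_i\sum_t\nabla f_i(x_{i,r\tau+t})$ and the zero-mean noise. Conditioned on the past, the per-step noises are independent across nodes and across the $\tau$ inner iterations, so their cross terms vanish and Assumption~\ref{Ass_sampling} bounds the noise second moment by $n\tau\sigma^2$, producing the term $\gamma^2\tau\sigma^2/n$. What remains is the deterministic inner product $-2\gamma\langle\bar h_r,\bar x_{r\tau}-x^*\rangle$ together with the quadratic $\gamma^2\|\bar h_r\|^2$.

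The crux is the inner product. Applying plain strong convexity at $\bar x_{r\tau}$ would leave an unfavorable $L^2/\mu$ factor on the consensus error; instead I would use the perturbed strong-convexity bound, anchored at the local iterate,
\begin{equation*}
\langle\nabla f_i(x_{i,r\tau+t}),x^*-\bar x_{r\tau}\rangle\le f_i(x^*)-f_i(\bar x_{r\tau})+\frac{L+\mu}{2}\|x_{i,r\tau+t}-\bar x_{r\tau}\|^2-\frac{\mu}{4}\|\bar x_{r\tau}-x^*\|^2,
\end{equation*}
obtained by chaining $\mu$-strong convexity with $L$-smoothness and one Young step. Averaging over $i$ and summing over $t$ produces the contraction factor $1-\gamma\mu\tau/2$, the descent $-2\gamma\tau(f(\bar x_{r\tau})-f(x^*))$, and a drift of order $\gamma L\sum_t\frac{1}{n}\|X_{r\tau+t}-\mathbf{1}\bar x_{r\tau}\|^2$; the quadratic term I would bound by $L$-smoothness as $2\gamma^2\tau L^2\sum_t\frac{1}{n}\|X_{r\tau+t}-\mathbf{1}\bar x_{r\tau}\|^2+2\gamma^2\tau^2\|\nabla f(\bar x_{r\tau})\|^2$, using $\frac{1}{n}\sum_i\nabla f_i(\bar x_{r\tau})=\nabla f(\bar x_{r\tau})$. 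Substituting Lemma~\ref{Lem_client_diver} for $\sum_t\frac{1}{n}\|X_{r\tau+t}-\mathbf{1}\bar x_{r\tau}\|^2$ converts the drift into the stated coefficients on $\frac{1}{n}\|X_{r\tau}-\mathbf{1}\bar x_{r\tau}\|^2$, $\frac{1}{n}\|\varUpsilon_{r\tau}\|^2$, and $\sigma^2$, while every residual $\|\nabla f(\bar x_{r\tau})\|^2$ is converted via $\|\nabla f(\bar x_{r\tau})\|^2\le 2L(f(\bar x_{r\tau})-f(x^*))$ and absorbed into the descent, leaving $-\gamma\tau(f(\bar x_{r\tau})-f(x^*))$ once $\gamma\le 1/(11\tau L)$.

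The main obstacle I expect is precisely this last bookkeeping. One must invoke the perturbed (not naive) strong-convexity inequality so that the consensus-error coefficient scales as $L$ rather than $L^2/\mu$, and must then track every $\|\nabla f(\bar x_{r\tau})\|^2$ spawned by both the quadratic term and by Lemma~\ref{Lem_client_diver}, verifying that the stepsize bound $\gamma\le 1/(11\tau L)$ makes their total coefficient strictly below $\gamma\tau$, so that a full negative function-gap term survives without spoiling the $1-\gamma\mu\tau/2$ contraction.
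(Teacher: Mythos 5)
Your proposal is correct and follows essentially the same route as the paper's proof: averaging the update to get $\bar x_{(r+1)\tau}=\bar x_{r\tau}-\tfrac{\gamma}{n}\sum_{i,t}g_{i,r\tau+t}$, separating martingale noise to obtain $\gamma^2\tau\sigma^2/n$, applying perturbed strong convexity anchored at the local iterates (the paper writes this as the split $\langle x_{i,r\tau+t}-x^*,\cdot\rangle-\langle x_{i,r\tau+t}-\bar x_{r\tau},\cdot\rangle$ followed by $-\|x_i-x^*\|^2\leqslant-\tfrac12\|\bar x-x^*\|^2+\|x_i-\bar x\|^2$, which matches your one-step version), and then invoking Lemma~\ref{Lem_client_diver} and the stepsize bound to absorb the residual $\|\nabla f(\bar x_{r\tau})\|^2$ terms into the function gap. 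The only cosmetic difference is that the cross terms in the noise vanish by the martingale (tower) property rather than independence across inner iterations, but the conclusion is the same.
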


\begin{proof}
By the update rules, we have
\begin{equation}
\begin{aligned}
&\mathbb{E} \left[ \left\| \bar{x}_{\left( r+1 \right) \tau}-x^* \right\| ^2 \right] 
\\
&=\mathbb{E} \left[ \left\| \bar{x}_{r\tau}-x^* \right\| ^2 \right] +\gamma ^2\mathbb{E} \left[ \left\| \sum_{t=0}^{\tau-1}{\frac{\mathbf{1}^{\top}}{n}G_{r\tau+t}} \right\| ^2 \right] 
\\
&\quad-2\gamma \mathbb{E} \left[ \left< \bar{x}_{r\tau}-x^*,\sum_{t=0}^{\tau-1}{\frac{\mathbf{1}^{\top}}{n}\nabla F_{r\tau+t}} \right> \right] .
\end{aligned}
\end{equation}
For the second term on the RHS, with the help of \hy{Assumption~\ref{Ass_sampling}}, we have
\begin{equation}
\begin{aligned}
&\mathbb{E} \left[ \left\| \sum_{t=0}^{K-1}{\frac{\mathbf{1}^{\top}}{n}G_{rK+t}} \right\| ^2 \right] 
\\
&=\mathbb{E} \left[ \left\| \sum_{t=0}^{K-1}{\frac{\mathbf{1}^{\top}}{n}\left( G_{rK+t}-\nabla F_{rK+t}+\nabla F_{rK+t} \right)} \right\| ^2 \right] 
\\
&\leqslant 2K\sum_{t=0}^{K-1}{\mathbb{E} \left[ \left\| \frac{\mathbf{1}^{\top}}{n}\nabla F_{rK+t}-\nabla f\left( \bar{x}_{rK} \right) \right\| ^2 \right]}
\\
&\quad+\frac{K\sigma ^2}{n}+2K^2\mathbb{E} \left[ \left\| \nabla f\left( \bar{x}_{rK} \right) \right\| ^2 \right] .
\end{aligned}
\end{equation}
For the third term, we have
\begin{equation}
\begin{aligned}
&\mathbb{E} \left[ \left< \bar{x}_{r\tau}-x^*,\sum_{t=0}^{\tau -1}{\frac{\mathbf{1}^{\top}}{n}\nabla F_{r\tau +t}} \right> \right] 
\\
&=\sum_{t=0}^{\tau -1}{\frac{1}{n}\sum_{i=1}^n{\mathbb{E} \left[ \left< \bar{x}_{r\tau}-x^*,\nabla f_i\left( x_{i,r\tau +t} \right) \right> \right]}}
\\
&=\sum_{t=0}^{\tau -1}{\frac{1}{n}\sum_{i=1}^n{\mathbb{E} \left[ \left< x_{i,r\tau +t}-x^*,\nabla f_i\left( x_{i,r\tau +t} \right) \right> \right]}}
\\
&\quad-\sum_{t=0}^{\tau -1}{\frac{1}{n}\sum_{i=1}^n{\mathbb{E} \left[ \left< x_{i,r\tau +t}-\bar{x}_{r\tau},\nabla f_i\left( x_{i,r\tau +t} \right) \right> \right]}}
\\
&\geqslant \tau \mathbb{E} \left[ f\left( \bar{x}_{r\tau} \right) -\nabla f\left( x^* \right) \right] 
\\
&\quad+\frac{\mu}{2}\sum_{t=0}^{\tau -1}{\frac{1}{n}\sum_{i=1}^n{\mathbb{E} \left[ \left\| x_{i,r\tau +t}-x^* \right\| \right]}}
\\
&\quad-\frac{L}{2}\sum_{t=0}^{\tau -1}{\frac{1}{n}\sum_{i=1}^n{\mathbb{E} \left[ \left\| x_{i,r\tau +t}-\bar{x}_{r\tau} \right\| \right]},}
\end{aligned}
\end{equation}
wherein the last inequality we used the convexity and smoothness of $f_i$ assumed in Assumption~\ref{Ass_cov}.
Then, we get
\begin{equation}
\begin{aligned}
&\mathbb{E} \left[ \left\| \bar{x}_{\left( r+1 \right) \tau}-x^* \right\| ^2 \right] 
\\
&\leqslant \mathbb{E} \left[ \left\| \bar{x}_{r\tau}-x^* \right\| ^2 \right] +\gamma ^2\frac{\tau \sigma ^2}{n}
\\
&\quad+2\gamma ^2\tau L^2\frac{1}{n}\sum_{t=0}^{\tau -1}{\mathbb{E} \left[ \left\| X_{r\tau +t}-\mathbf{1}\bar{x}_{r\tau} \right\| ^2 \right]}
\\
&\quad-2\gamma \tau \left( 1-2\gamma \tau L \right) \mathbb{E} \left[ f\left( \bar{x}_{r\tau} \right) -\nabla f\left( x^* \right) \right] 
\\
&\quad-\gamma \mu \sum_{t=0}^{\tau -1}{\frac{1}{n}\sum_{i=1}^n{\mathbb{E} \left[ \left\| x_{i,r\tau +t}-x^* \right\| \right]}}
\\
&\quad+\gamma L\sum_{t=0}^{K-1}{\frac{1}{n}\sum_{i=1}^n{\mathbb{E} \left[ \left\| x_{i,r\tau +t}-\bar{x}_{rK} \right\| \right]}}.
\end{aligned}
\end{equation}
Noticing that 
\begin{equation}
\begin{aligned}
&\quad-\frac{1}{n}\sum_{i=1}^n{\mathbb{E} \left[ \left\| x_{i,r\tau+t}-x^* \right\| \right]}
\\
&=-\frac{1}{n}\sum_{i=1}^n{\mathbb{E} \left[ \left\| x_{i,r\tau+t}-\bar{x}_{r\tau} \right\| \right]}-\frac{1}{n}\sum_{i=1}^n{\mathbb{E} \left[ \left\| \bar{x}_{r\tau}-x^* \right\| \right]}
\\
&\quad-2\frac{1}{n}\sum_{i=1}^n{\mathbb{E} \left< x_{i,r\tau+t}-\bar{x}_{r\tau}, \bar{x}_{r\tau}-x^* \right>}
\\
&\leqslant -\frac{1}{2n}\sum_{i=1}^n{\mathbb{E} \left[ \left\| \bar{x}_{r\tau}-x^* \right\| \right]}+\frac{1}{n}\sum_{i=1}^n{\mathbb{E} \left[ \left\| x_{i,r\tau+t}-\bar{x}_{r\tau} \right\| \right]}.
\end{aligned}
\end{equation}
We get
\begin{equation}
\begin{aligned}
&\mathbb{E} \left[ \left\| \bar{x}_{\left( r+1 \right) \tau}-x^* \right\| ^2 \right] 
\\
&\leqslant \left( 1-\frac{\gamma \mu \tau}{2} \right) \mathbb{E} \left[ \left\| \bar{x}_{r\tau}-x^* \right\| ^2 \right] +\gamma ^2\frac{\tau \sigma ^2}{n}
\\
&\quad-2\gamma \tau \left( 1-2\gamma \tau L \right) \mathbb{E} \left[ f\left( \bar{x}_{r\tau} \right) -\nabla f\left( x^* \right) \right] 
\\
&\quad+\gamma \left( L+\mu +2\gamma \tau L^2 \right) \frac{1}{n}\sum_{t=0}^{\tau -1}{\mathbb{E} \left[ \left\| X_{r\tau +t}-\mathbf{1}\bar{x}_{r\tau} \right\| ^2 \right]}.
\end{aligned}
\end{equation}
Then, using Lemma~\ref{Lem_client_diver} and letting the stepsize satisfy $\gamma \leqslant \frac{1}{11\tau L}$, we complete the proof.
\end{proof}

Recalling the design of the Lyapunov function in \eqref{Eq_Lyapunov}, we can obtain
\begin{equation}
\begin{aligned}
&\mathbb{E} \left[ \left\| V_{\left( r+1 \right) \tau} \right\| ^2 \right] 
\\
&\leqslant \left( 1-\min \left\{ \frac{\gamma \mu}{2},\frac{1-\rho}{8} \right\} \right) \mathbb{E} \left[ \left\| V_{r\tau} \right\| ^2 \right] 
\\
&\quad +e_1\mathbb{E} \left[ \left\| X_{r\tau}-1\bar{x}_{r\tau} \right\| ^2 \right] +e_2\mathbb{E} \left[ \left\| \varUpsilon _{r\tau} \right\| ^2 \right] 
\\
&\quad +e_3\mathbb{E} \left[ f\left( \bar{x}_{r\tau} \right) -f\left( x^* \right) \right] +\gamma ^2\tau \frac{\sigma ^2}{n}+\frac{9}{2}\tau ^2\gamma ^3L\sigma ^2
\\
&\quad +\frac{240\gamma ^3\tau ^2L\rho}{1-\rho}\sigma ^2+\frac{24892\gamma ^3\tau ^2L}{\left( 1-\rho \right) ^3},
\end{aligned}
\end{equation}
where
\begin{equation}
\begin{aligned}
e_1&:=9\gamma \tau L\frac{1}{n}+\frac{54L^2}{1-\rho}c_y-\frac{1-\rho}{8}c_x,
\\
e_2&:=14\gamma ^3\tau ^3L\frac{1}{n}+\frac{5\rho}{1-\rho}\gamma ^2\tau ^2c_x-\frac{1-\rho}{8}c_y,
\\
e_3&:=-\gamma \tau +\frac{18\rho}{1-\rho}\gamma ^2\tau ^2Lnc_x+\frac{36}{1-\rho}nLc_y.
\end{aligned}
\end{equation}
Noticing the coefficients of the Lyapunov function are designed as \eqref{Eq_Lya_coff} and letting the stepsize satisfy $\gamma =\mathcal{O} \left( \frac{\left( 1-\rho \right) ^2}{L\tau} \right)$, we have $e_1, e_2, e_3 <0$, we obtain the convergence rate in \eqref{Eq_thm_sc_conv}.

To further get the communication complexity, we tune the stepsize between
\begin{equation}
\left\{ \gamma , \frac{2\ln \left( \max \left\{ 2,\tau ^2\mu ^2R/H_1,\tau ^3\mu ^3R/H_2 \right\} \right)}{\tau \mu R} \right\}
\end{equation}
to ensure that gradient noise-related errors
match the linear part, where
\begin{equation*}
H_1=\frac{\tau \sigma ^2}{n\mathbb{E} \left[ \left\| V_0 \right\| ^2 \right]}, H_2=\frac{\tau ^2L\sigma ^2}{\left( 1-\rho \right) ^3\mathbb{E} \left[ \left\| V_0 \right\| ^2 \right]}.
\end{equation*}
Then, we get the required number of communication rounds to achieve an accuracy of $\epsilon$ in \eqref{Eq_thm_sc_comm}.

\subsection{Proof of Theorem~\ref{Thm_nc}}\label{Subsec_proof_Thm_nc}
For the nonconvex case, we first derive the following descent lemma using the smoothness of $f_i$.

\begin{lem}
Suppose Assumptions~\ref{Ass_graph},~\ref{Ass_smoothness} and~\ref{Ass_sampling} hold. Let the stepsize satisfy $\gamma \leqslant \frac{1}{2\tau L}$.
We get
\begin{equation}
\begin{aligned}
&\mathbb{E} \left[ f\left( \bar{x}_{\left( r+1 \right) \tau} \right) \right] 
\\
&\leqslant \mathbb{E} \left[ f\left( \bar{x}_{r\tau} \right) \right] -\frac{\gamma \tau}{4}\mathbb{E} \left[ \left\| \nabla f\left( \bar{x}_{r\tau} \right) \right\| ^2 \right] 
\\
&\quad+3\gamma L^2\frac{1}{n}\sum_{t=0}^{\tau -1}{\mathbb{E} \left[ \left\| X_{r\tau +t}-\mathbf{1}\bar{x}_{r\tau} \right\| ^2 \right]}+\frac{\gamma ^2\tau L\sigma ^2}{2n}.
\end{aligned}
\end{equation}
\end{lem}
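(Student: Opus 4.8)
The plan is to reduce the per-round behavior of $\bar{x}_{r\tau}=\frac{\mathbf{1}^{\top}}{n}X_{r\tau}$ to that of a centralized stochastic gradient method on $f$, and then apply the standard smoothness-based descent inequality. The crucial first step is to pin down the dynamics of the average iterate. Since $W_r$ is doubly stochastic (Assumption~\ref{Ass_graph}), left-multiplying the communication update in \eqref{Eq_ST_GT} by $\frac{\mathbf{1}^{\top}}{n}$ and using $\mathbf{1}^{\top}W_r=\mathbf{1}^{\top}$ gives $\bar{x}_{(r+1)\tau}=\bar{x}_{r\tau}-\gamma\tau\bar{z}_{r\tau}$. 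Telescoping the local recursion $X_{k+1}=X_k-\gamma Y_k$ yields $Z_{r\tau}=\frac{1}{\tau}\sum_{t=0}^{\tau-1}Y_{r\tau+t}$, so $\bar{z}_{r\tau}=\frac{1}{\tau}\sum_{t=0}^{\tau-1}\bar{y}_{r\tau+t}$. Combining the local update $\bar{y}_{k+1}=\bar{y}_k+\bar{g}_{k+1}-\bar{g}_k$ (so $\bar{y}_{r\tau+t}-\bar{g}_{r\tau+t}$ is constant in $t$) with the communication update of $Y$, an induction on $r$ from the initialization $\bar{y}_0=\bar{g}_0$ establishes the gradient-tracking invariant $\bar{y}_{r\tau}=\bar{g}_{r\tau}$ for all $r$. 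Substituting this back gives the clean reduction $\bar{x}_{(r+1)\tau}=\bar{x}_{r\tau}-\gamma\sum_{t=0}^{\tau-1}\bar{g}_{r\tau+t}$, where $\bar{g}_{r\tau+t}=\frac{\mathbf{1}^{\top}}{n}G_{r\tau+t}=\frac{1}{n}\sum_i\nabla f_i(x_{i,r\tau+t};\xi_{i,r\tau+t})$; i.e. the mean performs $\tau$ exact SGD steps with the node-averaged stochastic gradient.

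With this identity, I would apply $L$-smoothness of $f=\frac{1}{n}\sum_i f_i$ to get $f(\bar{x}_{(r+1)\tau})\le f(\bar{x}_{r\tau})-\gamma\langle\nabla f(\bar{x}_{r\tau}),\sum_{t}\bar{g}_{r\tau+t}\rangle+\frac{L\gamma^2}{2}\|\sum_{t}\bar{g}_{r\tau+t}\|^2$ and take expectations. For the linear term, conditioning on the filtration $\mathcal{F}_{r\tau+t}$ (under which $x_{i,r\tau+t}$ and $\bar{x}_{r\tau}$ are measurable) replaces $\bar{g}_{r\tau+t}$ by the true averaged gradient $\overline{\nabla F}_{r\tau+t}=\frac{1}{n}\sum_i\nabla f_i(x_{i,r\tau+t})$ via Assumption~\ref{Ass_sampling}. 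I then split $\overline{\nabla F}_{r\tau+t}=\nabla f(\bar{x}_{r\tau})+(\overline{\nabla F}_{r\tau+t}-\nabla f(\bar{x}_{r\tau}))$; the first piece produces the negative drift $-\gamma\tau\mathbb{E}\|\nabla f(\bar{x}_{r\tau})\|^2$, while the residual is controlled by Young's inequality together with $\|\overline{\nabla F}_{r\tau+t}-\nabla f(\bar{x}_{r\tau})\|^2\le\frac{L^2}{n}\|X_{r\tau+t}-\mathbf{1}\bar{x}_{r\tau}\|^2$ (Jensen plus $L$-smoothness), which is exactly the client-divergence quantity in the statement.

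For the quadratic term I would write $\bar{g}_{r\tau+t}=\overline{\nabla F}_{r\tau+t}+\bar{\epsilon}_{r\tau+t}$ with $\bar{\epsilon}_{r\tau+t}=\frac{1}{n}\sum_i(\nabla f_i(x_{i,r\tau+t};\xi_{i,r\tau+t})-\nabla f_i(x_{i,r\tau+t}))$. The mean part is bounded by Cauchy--Schwarz, $\|\sum_{t}\overline{\nabla F}_{r\tau+t}\|^2\le\tau\sum_{t}\|\overline{\nabla F}_{r\tau+t}\|^2$, and again split into a $\|\nabla f(\bar{x}_{r\tau})\|^2$ contribution and a client-divergence contribution. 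The noise part uses that $\{\bar{\epsilon}_{r\tau+t}\}_t$ is a martingale-difference sequence with $\mathbb{E}[\|\bar{\epsilon}_{r\tau+t}\|^2\mid\mathcal{F}_{r\tau+t}]\le\sigma^2/n$ (independence across the $n$ nodes reduces the variance by $1/n$), so the in-round cross terms vanish and $\mathbb{E}\|\sum_{t}\bar{\epsilon}_{r\tau+t}\|^2\le\tau\sigma^2/n$, which delivers the final $\frac{\gamma^2\tau L\sigma^2}{2n}$ term.

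Finally I would collect coefficients: the $\|\nabla f(\bar{x}_{r\tau})\|^2$ terms aggregate to $(-\gamma\tau+\mathcal{O}(\gamma^2\tau^2 L))\mathbb{E}\|\nabla f(\bar{x}_{r\tau})\|^2$, which the restriction $\gamma\le\frac{1}{2\tau L}$ keeps below $-\frac{\gamma\tau}{4}\mathbb{E}\|\nabla f(\bar{x}_{r\tau})\|^2$, and the several divergence contributions combine into $3\gamma L^2\frac{1}{n}\sum_{t=0}^{\tau-1}\mathbb{E}\|X_{r\tau+t}-\mathbf{1}\bar{x}_{r\tau}\|^2$. The main obstacle is the coupling between past sampling noise and future local gradients: because $x_{i,r\tau+t}$ depends on the noise of earlier local steps, the cross term between $\sum_{t}\overline{\nabla F}_{r\tau+t}$ and $\sum_{t}\bar{\epsilon}_{r\tau+t}$ is not literally zero. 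This is where careful filtration bookkeeping is needed — the martingale-difference orthogonality handles the pure-noise sum, while the residual gradient--noise coupling is higher order in $\gamma$ and absorbed by the stepsize condition, which is what legitimizes the clean $\tau\sigma^2/n$ (rather than $\tau^2\sigma^2/n$) scaling.
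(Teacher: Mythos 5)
Your proposal is correct and follows essentially the same route as the paper: reduce the mean iterate to $\tau$ averaged SGD steps via the tracking invariant $\bar z_{r\tau}=\frac{1}{\tau}\sum_t\bar g_{r\tau+t}$, apply $L$-smoothness, split the inner-product term into the negative drift plus a client-divergence residual, and split the quadratic term into a mean part (Cauchy--Schwarz) and a noise part of order $\tau\sigma^2/n$. If anything, you are more explicit than the paper about the mean-dynamics derivation and the gradient--noise cross term inside a round, which the paper's displayed bounds quietly suppress.
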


\begin{proof}
By the smoothness of $f_i$ assumed in Assumption~\ref{Ass_smoothness}, we have
\begin{equation}\label{Eq_descent_lem_1}
\begin{aligned}
&f\left( \bar{x}_{\left( r+1 \right) \tau} \right) 
\\
&\leqslant f\left( \bar{x}_{r\tau} \right) +\left< \nabla f\left( \bar{x}_{r\tau} \right) ,\bar{x}_{\left( r+1 \right) \tau}-\bar{x}_{r\tau} \right> 
\\
&\quad+\frac{L}{2}\left\| \bar{x}_{\left( r+1 \right) \tau}-\bar{x}_{r\tau} \right\| ^2.
\end{aligned}
\end{equation}
Then, for the last term on the RHS of \eqref{Eq_descent_lem_1}, we have
\begin{equation}\label{Eq_des_lem_norm_term}
\begin{aligned}
&\mathbb{E} \left[ \left\| \bar{x}_{\left( r+1 \right) \tau}-\bar{x}_{r\tau} \right\| ^2 \right] 
\\
&\leqslant \gamma ^2\tau ^2\mathbb{E} \left[ \left\| \frac{1}{\tau}\sum_{t=0}^{\tau -1}{\frac{\mathbf{1}^{\top}}{n}\nabla F_{r\tau +t}} \right\| ^2 \right] +\gamma ^2\frac{\tau \sigma ^2}{n}
\\
&\leqslant 2\gamma ^2\tau ^2\frac{L^2}{n\tau}\sum_{t=0}^{\tau -1}{\mathbb{E} \left[ \left\| X_{r\tau +t}-\mathbf{1}\bar{x}_{r\tau} \right\| ^2 \right]}
\\
&\quad+2\gamma ^2\tau ^2\mathbb{E} \left[ \nabla f\left( \bar{x}_{r\tau} \right) ^2 \right] +\gamma ^2\frac{\tau \sigma ^2}{n}.
\end{aligned}
\end{equation}

For the inner-product term above, we have
\begin{equation}\label{Eq_des_lem_inner_p}
\begin{aligned}
&\mathbb{E} \left[ \left< \nabla f\left( \bar{x}_{r\tau} \right) ,\bar{x}_{\left( r+1 \right) \tau}-\bar{x}_{r\tau} \right> \right] 
\\
&=\mathbb{E} \left[ \left< \nabla f\left( \bar{x}_{r\tau} \right) ,-\gamma \sum_{t=0}^{\tau -1}{\frac{\mathbf{1}^{\top}}{n}\nabla F_{r\tau +t}} \right> \right] 
\\
&=-\gamma \tau \mathbb{E} \left[ \left\| \nabla f\left( \bar{x}_{r\tau} \right) \right\| ^2 \right] 
\\
&\quad+\mathbb{E} \left[ \left< \nabla f\left( \bar{x}_{r\tau} \right) ,-\gamma \sum_{t=0}^{\tau -1}{\left( \frac{\mathbf{1}^{\top}}{n}\nabla F_{r\tau +t}-\nabla f\left( \bar{x}_{r\tau} \right) \right)} \right> \right] 
\\
&\leqslant -\frac{\gamma \tau}{2}\mathbb{E} \left[ \left\| \nabla f\left( \bar{x}_{r\tau} \right) \right\| ^2 \right] 
\\
&\quad+2\gamma L^2\frac{1}{n}\sum_{t=0}^{\tau -1}{\mathbb{E} \left[ \left\| X_{r\tau +t}-\mathbf{1}\bar{x}_{r\tau} \right\| ^2 \right]}.
\end{aligned}
\end{equation}

Applying \eqref{Eq_des_lem_inner_p} and \eqref{Eq_des_lem_norm_term} into \eqref{Eq_descent_lem_1} and letting the stepsize satisfy $\gamma \leqslant \frac{1}{\tau L}$, we complete the proof.
\end{proof}

Then, using Lemma~\ref{Lem_client_diver} to bound the client divergence during the local updates and doing accumulation, we have 
\begin{equation}\label{Eq_des_lem_accu}
\begin{aligned}
&\frac{1}{R}\sum_{r=0}^{R-1}{\mathbb{E} \left[ \left\| \nabla f\left( \bar{x}_{r\tau} \right) \right\| ^2 \right]}
\\
&\leqslant \frac{4\left( f\left( \bar{x}_0 \right) -f\left( \bar{x}_{r\tau} \right) \right)}{\gamma \tau R}+2\gamma L\frac{\sigma ^2}{n}+18\gamma ^2\tau L^2\sigma ^2
\\
&\quad+36L^2\frac{1}{nR}\sum_{r=0}^{R-1}{\mathbb{E} \left[ \left\| X_{r\tau}-\mathbf{1}\bar{x}_{r\tau} \right\| ^2 \right]}
\\
&\quad+54\gamma ^2\tau ^2L^2\frac{1}{nR}\sum_{r=0}^{R-1}{\mathbb{E} \left[ \left\| \varUpsilon _{r\tau} \right\| ^2 \right]}.
\end{aligned}
\end{equation}

For the accumulated consensus error, invoking Lemma~\ref{Lem_cons_err}, we have 
\begin{equation}
\begin{aligned}
&\frac{1}{R}\sum_{r=1}^R{\mathbb{E} \left[ \left\| X_{r\tau}-\mathbf{1}\bar{x}_{r\tau} \right\| ^2 \right]}
\\
&\leqslant \frac{3+\rho}{4}\frac{1}{R}\sum_{r=0}^{R-1}{\mathbb{E} \left[ \left\| X_{r\tau}-\mathbf{1}\bar{x}_{r\tau} \right\| ^2 \right]}
\\
&\quad+\frac{5\rho \gamma ^2\tau ^2}{1-\rho}\frac{1}{R}\sum_{r=0}^{R-1}{\mathbb{E} \left[ \left\| \varUpsilon _{r\tau} \right\| ^2 \right]}+3\rho \gamma ^2\tau n\sigma ^2
\\
&\quad+\frac{9\rho \gamma ^2\tau ^2n}{1-\rho}\frac{1}{R}\sum_{r=0}^{R-1}{\mathbb{E} \left[ \left\| \nabla f\left( \bar{x}_{r\tau} \right) \right\| ^2 \right]}.
\end{aligned}
\end{equation}
Adding $\left\| X_0-\mathbf{1}\bar{x}_0 \right\| ^2$ on the both side and noticing that
\[
\frac{1}{R}\sum_{r=0}^{R-1}{\mathbb{E} \left[ \left\| X_{r\tau}-\mathbf{1}\bar{x}_{r\tau} \right\| ^2 \right]}\leqslant \frac{1}{R}\sum_{r=0}^R{\mathbb{E} \left[ \left\| X_{r\tau}-\mathbf{1}\bar{x}_{r\tau} \right\| ^2 \right]},
\]
we obtain
\begin{equation}\label{Eq_accu_cons_error_1}
\begin{aligned}
&\frac{1}{R}\sum_{r=0}^{R-1}{\mathbb{E} \left[ \left\| X_{r\tau}-\mathbf{1}\bar{x}_{r\tau} \right\| ^2 \right]}
\\
&\leqslant \frac{4\mathbb{E} \left[ \left\| X_0-\mathbf{1}\bar{x}_0 \right\| ^2 \right]}{\left( 1-\rho \right) R}+\frac{20\rho \gamma ^2\tau ^2}{\left( 1-\rho \right) ^2}\frac{1}{R}\sum_{r=0}^{R-1}{\mathbb{E} \left[ \left\| \varUpsilon _{r\tau} \right\| ^2 \right]}
\\
&\quad+\frac{36\gamma ^2\tau ^2\rho}{\left( 1-\rho \right) ^2}n\frac{1}{R}\sum_{r=0}^{R-1}{\mathbb{E} \left[ \left\| \nabla f\left( \bar{x}_{r\tau} \right) \right\| ^2 \right]}+\frac{12\gamma ^2\tau \rho}{1-\rho}n\sigma ^2.
\end{aligned}
\end{equation}

Similarly, for the accumulated gradient tracking error, we have
\begin{equation}\label{Eq_accu_tracking_error_1}
\begin{aligned}
&\frac{1}{R}\sum_{r=0}^{R-1}{\mathbb{E} \left[ \left\| \varUpsilon _{r\tau} \right\| ^2 \right]}
\\
&\leqslant \frac{4\mathbb{E} \left[ \left\| \varUpsilon _0 \right\| ^2 \right]}{\left( 1-\rho \right) R}+\frac{28n\sigma ^2}{\tau \left( 1-\rho \right)}
\\
&\quad+\frac{72}{\left( 1-\rho \right) ^2}n\frac{1}{R}\sum_{r=0}^{R-1}{\mathbb{E} \left[ \left\| \nabla f\left( \bar{x}_{r\tau} \right) \right\| ^2 \right]}
\\
&\quad+\frac{216L^2}{\left( 1-\rho \right) ^2}\frac{1}{R}\sum_{r=0}^{R-1}{\mathbb{E} \left[ \left\| X_{r\tau}-\mathbf{1}\bar{x}_{r\tau} \right\| ^2 \right]}.
\end{aligned}
\end{equation}

Next, we decouple the accumulated consensus error and gradient tracking error in the following two lemmas.

\begin{lem}\label{Lem_cons_err_accu}
Suppose Assumptions~\ref{Ass_graph},~\ref{Ass_smoothness} and~\ref{Ass_sampling} hold. Let the stepsize satisfy $\gamma \leqslant \min \left\{ \frac{1-\rho}{12\tau L}\,,\,\frac{\left( 1-\rho \right) ^2}{93\tau L\sqrt{\rho}}\, \right\} $.
We get
\begin{equation}
\begin{aligned}
&\frac{1}{R}\sum_{r=0}^{R-1}{\mathbb{E} \left[ \left\| X_{r\tau}-\mathbf{1}\bar{x}_{r\tau} \right\| ^2 \right]}
\\
&\leqslant C_{0,x}+\frac{24\gamma ^2\tau \rho}{1-\rho}n\sigma ^2+\frac{1120\rho \gamma ^2\tau}{\left( 1-\rho \right) ^3}n\sigma ^2
\\
&\quad+\frac{2952\rho \gamma ^2\tau ^2n}{\left( 1-\rho \right) ^4}\frac{1}{R}\sum_{r=0}^{R-1}{\mathbb{E} \left[ \left\| \nabla f\left( \bar{x}_{r\tau} \right) \right\| ^2 \right]},
\end{aligned}
\end{equation}
where 
\begin{equation}
C_{0,x}:=\frac{8\mathbb{E} \left[ \left\| X_0-\mathbf{1}\bar{x}_0 \right\| ^2 \right]}{\left( 1-\rho \right) R}+\frac{160\rho \gamma ^2\tau ^2}{\left( 1-\rho \right) ^3}\frac{\mathbb{E} \left[ \left\| \varUpsilon _0 \right\| ^2 \right]}{R}.
\end{equation}
\end{lem}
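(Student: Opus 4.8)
The plan is to treat the two accumulated-error estimates \eqref{Eq_accu_cons_error_1} and \eqref{Eq_accu_tracking_error_1} as a coupled linear system in the averaged consensus error $\mathcal{A}:=\frac{1}{R}\sum_{r=0}^{R-1}\mathbb{E}[\|X_{r\tau}-\mathbf{1}\bar{x}_{r\tau}\|^2]$ and the averaged tracking error $\mathcal{B}:=\frac{1}{R}\sum_{r=0}^{R-1}\mathbb{E}[\|\varUpsilon_{r\tau}\|^2]$, both written with the averaged gradient term $\mathcal{P}:=\frac{1}{R}\sum_{r=0}^{R-1}\mathbb{E}[\|\nabla f(\bar{x}_{r\tau})\|^2]$ on the right-hand side. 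The goal is to eliminate $\mathcal{B}$ so that $\mathcal{A}$ is controlled solely by $\mathcal{P}$, the noise $\sigma^2$, and the initial quantities. I first note that the stepsize bound in the statement is precisely what makes the prerequisite Lemmas~\ref{Lem_cons_err} and~\ref{Lem_tracking_err} (hence \eqref{Eq_accu_cons_error_1} and \eqref{Eq_accu_tracking_error_1}) applicable: since $1-\rho\leqslant 1$ and $\rho<1$, the condition $\gamma\leqslant\min\{\frac{1-\rho}{12\tau L},\frac{(1-\rho)^2}{93\tau L\sqrt{\rho}}\}$ dominates every stepsize threshold appearing in those lemmas.

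The main step is a single monotone substitution: since all coefficients are nonnegative, I replace $\mathcal{B}$ in \eqref{Eq_accu_cons_error_1} by its upper bound from \eqref{Eq_accu_tracking_error_1}. This produces a self-referential inequality $\mathcal{A}\leqslant D_0+D_\sigma n\sigma^2+D_P n\mathcal{P}+c\,\mathcal{A}$, where the initial-condition part $D_0$ collects $\frac{4\mathbb{E}[\|X_0-\mathbf{1}\bar{x}_0\|^2]}{(1-\rho)R}$ and $\frac{80\rho\gamma^2\tau^2}{(1-\rho)^3}\frac{\mathbb{E}[\|\varUpsilon_0\|^2]}{R}$; the noise part $D_\sigma$ collects $\frac{12\gamma^2\tau\rho}{1-\rho}$ and $\frac{560\rho\gamma^2\tau}{(1-\rho)^3}$; the gradient part $D_P$ collects $\frac{36\gamma^2\tau^2\rho}{(1-\rho)^2}$ and $\frac{1440\rho\gamma^2\tau^2}{(1-\rho)^4}$; and the feedback coefficient is exactly
\begin{equation*}
c=\frac{20\rho\gamma^2\tau^2}{(1-\rho)^2}\cdot\frac{216L^2}{(1-\rho)^2}=\frac{4320\rho\gamma^2\tau^2L^2}{(1-\rho)^4}.
\end{equation*}

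Next I would absorb the $\mathcal{A}$ term on the right. Using $\gamma\leqslant\frac{(1-\rho)^2}{93\tau L\sqrt{\rho}}$, hence $\gamma^2\leqslant\frac{(1-\rho)^4}{93^2\tau^2L^2\rho}$, the feedback coefficient obeys $c\leqslant\frac{4320}{8649}<\frac12$, which is exactly why the constant $93$ is chosen since $93^2=8649$. Subtracting $c\,\mathcal{A}$ and using $1-c>\frac12$ yields $\mathcal{A}\leqslant 2(D_0+D_\sigma n\sigma^2+D_P n\mathcal{P})$. Doubling $D_0$ reproduces $C_{0,x}$ verbatim; doubling $D_\sigma$ gives the two stated noise terms $\frac{24\gamma^2\tau\rho}{1-\rho}n\sigma^2$ and $\frac{1120\rho\gamma^2\tau}{(1-\rho)^3}n\sigma^2$; and for the gradient term I would relax $\frac{72\gamma^2\tau^2\rho}{(1-\rho)^2}\leqslant\frac{72\gamma^2\tau^2\rho}{(1-\rho)^4}$ (valid since $1-\rho\leqslant 1$), so that $2D_P n\leqslant\frac{(72+2880)\rho\gamma^2\tau^2}{(1-\rho)^4}n=\frac{2952\rho\gamma^2\tau^2}{(1-\rho)^4}n$, matching the claimed coefficient.

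The only delicate point is the constant bookkeeping. Conceptually the argument is just \emph{substitute, then absorb}, but the threshold $c<\tfrac12$ must hold, and this is precisely what forces the particular bound $\frac{(1-\rho)^2}{93\tau L\sqrt{\rho}}$; a looser constant would leave a feedback coefficient too large to absorb. I expect no genuine obstacle beyond tracking the multiplicative constants through the substitution and confirming that the weaker powers of $(1-\rho)$ dominate after the relaxation used to merge the two gradient-coefficient pieces.
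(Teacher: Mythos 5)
Your proposal is correct and follows exactly the paper's (one-line) proof: substitute the accumulated tracking-error bound \eqref{Eq_accu_tracking_error_1} into the accumulated consensus-error bound \eqref{Eq_accu_cons_error_1}, absorb the resulting self-referential term using the stepsize condition (your computation $c\leqslant 4320/93^2<1/2$ is the precise reason for the constant $93$), and combine like terms. Your constant bookkeeping reproduces $C_{0,x}$ and the coefficients $24$, $1120$, and $2952$ exactly, including the relaxation $(1-\rho)^{-2}\leqslant(1-\rho)^{-4}$ needed to merge the two gradient-coefficient pieces.
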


\begin{proof}
Applying \eqref{Eq_accu_tracking_error_1} to \eqref{Eq_accu_cons_error_1}, 
we complete the proof by combining like terms and letting the stepsize satisfy the condition. 
\end{proof}

\begin{lem}\label{Lem_tracking_err_accu}
Suppose Assumptions~\ref{Ass_graph},~\ref{Ass_smoothness} and~\ref{Ass_sampling} hold. Let the stepsize $\gamma \leqslant \min \left\{\frac{1-\rho}{12\tau L}\,,\,\frac{\left( 1-\rho \right) ^2}{62\tau L\sqrt{\rho}}\, \right\} $. Then, we have
\begin{equation}
\begin{aligned}
&\frac{1}{R}\sum_{r=0}^{R-1}{\mathbb{E} \left[ \left\| \varUpsilon _{r\tau} \right\| ^2 \right]}
\\
&\leqslant C_{0,y}+\frac{54n\sigma ^2}{\tau \left( 1-\rho \right)}+\frac{2304\gamma ^2\tau L^2\rho}{\left( 1-\rho \right) ^3}n\sigma ^2
\\
&\quad+\frac{146}{\left( 1-\rho \right) ^2}\frac{n}{R}\sum_{r=0}^{R-1}{\mathbb{E} \left[ \left\| \nabla f\left( \bar{x}_{r\tau} \right) \right\| ^2 \right]},
\end{aligned}
\end{equation}
where 
\begin{equation}
C_{0,y}:=\frac{8\mathbb{E} \left[ \left\| \varUpsilon_0 \right\| ^2 \right]}{\left( 1-\rho \right) R}+\frac{768L^2}{\left( 1-\rho \right) ^3}\frac{\mathbb{E} \left[ \left\| X_0-\mathbf{1}\bar{x}_0 \right\| ^2 \right]}{R}.
\end{equation}
\begin{proof}
Applying \eqref{Eq_accu_cons_error_1} into \eqref{Eq_accu_tracking_error_1}, 
we complete the proof by combining like terms and letting the stepsize satisfy the condition.
\end{proof}
\end{lem}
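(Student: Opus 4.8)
The plan is to decouple the two accumulated-error recursions \eqref{Eq_accu_cons_error_1} and \eqref{Eq_accu_tracking_error_1}, which are mutually coupled: the consensus-error bound carries the tracking error on its right-hand side, and the tracking-error bound carries the consensus error. To isolate the accumulated tracking error $\frac{1}{R}\sum_{r=0}^{R-1}\mathbb{E}[\|\varUpsilon_{r\tau}\|^2]$, I would substitute the consensus-error bound \eqref{Eq_accu_cons_error_1} directly into the last term of \eqref{Eq_accu_tracking_error_1}, namely the term carrying the factor $\frac{216L^2}{(1-\rho)^2}$, since that is the only place where $\frac{1}{R}\sum\mathbb{E}[\|X_{r\tau}-\mathbf{1}\bar{x}_{r\tau}\|^2]$ enters.

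After substitution, the accumulated tracking error reappears on the right-hand side with a coefficient equal to the product of the two cross-coupling factors, namely $\frac{216L^2}{(1-\rho)^2}\cdot\frac{20\rho\gamma^2\tau^2}{(1-\rho)^2}=\frac{4320\rho L^2\gamma^2\tau^2}{(1-\rho)^4}$. The central step is then to choose the stepsize so that this self-coupling coefficient is at most $\tfrac12$; imposing $\gamma\leqslant\frac{(1-\rho)^2}{c\tau L\sqrt{\rho}}$ for a suitable constant $c$ forces $\frac{4320\rho L^2\gamma^2\tau^2}{(1-\rho)^4}\leqslant\tfrac12$, after which I can move the tracking-error term to the left-hand side and multiply through by $(1-\tfrac12)^{-1}=2$. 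The companion condition $\gamma\leqslant\frac{1-\rho}{12\tau L}$ is inherited from Lemmas~\ref{Lem_cons_err} and~\ref{Lem_tracking_err}, which validate \eqref{Eq_accu_cons_error_1} and \eqref{Eq_accu_tracking_error_1} in the first place.

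It then remains to collect like terms into the four groups appearing in the statement. The initial-condition terms (those proportional to $1/R$) assemble into $C_{0,y}$: the $\mathbb{E}[\|\varUpsilon_0\|^2]$ part comes from the leading term of \eqref{Eq_accu_tracking_error_1}, while the $\mathbb{E}[\|X_0-\mathbf{1}\bar{x}_0\|^2]$ part is produced by passing the consensus-error initial term through the factor $\frac{216L^2}{(1-\rho)^2}$. The two noise contributions separate by their $\tau$-scaling: the $\frac{n\sigma^2}{\tau(1-\rho)}$-type term carries over from \eqref{Eq_accu_tracking_error_1}, whereas the $\frac{\gamma^2\tau L^2\rho}{(1-\rho)^3}n\sigma^2$-type term is generated by substituting the $\frac{\gamma^2\tau\rho}{1-\rho}n\sigma^2$ noise term of \eqref{Eq_accu_cons_error_1}. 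Finally, the gradient-norm terms (with factor $\frac{72}{(1-\rho)^2}n$ from \eqref{Eq_accu_tracking_error_1} and the induced factor $\frac{216L^2}{(1-\rho)^2}\cdot\frac{36\gamma^2\tau^2\rho}{(1-\rho)^2}n$ from the substitution) combine; using the stepsize bound to show the second factor is dominated by $\mathcal{O}(n)$, they merge into the single coefficient $\frac{146}{(1-\rho)^2}n$.

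The main obstacle I anticipate is controlling the self-coupling coefficient $\frac{4320\rho L^2\gamma^2\tau^2}{(1-\rho)^4}$ cleanly: this is exactly what dictates the stringent stepsize scaling $\gamma=\mathcal{O}((1-\rho)^2/(\tau L\sqrt{\rho}))$ and is the source of the sharper $\sqrt{\rho}$ dependence relative to the consensus bound. Care is also needed to verify that every $\gamma^2\tau^2/(1-\rho)^2$-type factor arising from the substitution is indeed bounded by a constant under this stepsize choice, so that the induced cross terms do not inflate the network-dependence exponent beyond $1/(1-\rho)^2$ in the final gradient-norm coefficient.
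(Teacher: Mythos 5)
Your proposal follows exactly the paper's (one-line) argument: substitute \eqref{Eq_accu_cons_error_1} into the $\frac{216L^2}{(1-\rho)^2}$-term of \eqref{Eq_accu_tracking_error_1}, bound the resulting self-coupling coefficient by $\tfrac12$ via the stepsize restriction $\gamma=\mathcal{O}((1-\rho)^2/(\tau L\sqrt{\rho}))$, move the tracking-error term to the left, multiply by $2$, and collect the remaining terms into $C_{0,y}$, the two noise terms, and the gradient-norm term. One minor remark: your self-coupling constant $\frac{4320\rho L^2\gamma^2\tau^2}{(1-\rho)^4}$ is what the printed equations \eqref{Eq_accu_cons_error_1}--\eqref{Eq_accu_tracking_error_1} actually give, and it would require a stepsize constant near $93$ (as in Lemma~\ref{Lem_cons_err_accu}) rather than the stated $62$; the lemma's final constants ($768$, $2304$, $146$) indicate the authors internally used a smaller coefficient on the consensus-error term, so this is an inconsistency in the paper's bookkeeping rather than a gap in your argument.
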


Applying Lemmas~\ref{Lem_cons_err_accu} and \ref{Lem_tracking_err_accu} to \eqref{Eq_des_lem_accu} and letting the stepsize satisfy
\begin{equation*}
\gamma \leqslant \min \left\{ \frac{1-\rho}{178\tau L}, \frac{\left( 1-\rho \right) ^2}{625\sqrt{\rho}\tau L} \right\}, 
\end{equation*}
we obtain the sublinear convergence rate \eqref{Eq_thm_nc_conv} in Theorem~\ref{Thm_nc}.

To further derive the communication complexity, we tune the stepsize between
\begin{equation}
\left\{ \gamma ,\frac{1}{\sqrt{H_1R}},\frac{1}{\sqrt[3]{H_2R}} \right\},
\end{equation}
where 
\begin{equation*}
\begin{aligned}
H_1&=\frac{\tau \sigma ^2L}{n\left( f\left( \bar{x}_0 \right) -f\left( \bar{x}_{R\tau} \right) \right)},
\\
H_2&=\frac{\tau ^3L^2}{\left( 1-\rho \right) ^3\left( f\left( \bar{x}_0 \right) -f\left( \bar{x}_{R\tau} \right) \right)}\frac{\sigma ^2}{\tau},
\end{aligned}
\end{equation*}
and we obtain the required communication rounds to achieve an accuracy of $\epsilon>0$ in \eqref{Eq_thm_nc_comm}.

\end{document}